\newtheorem{conjuncture}{Conjuncture}
\begin{document}

\title{Deep Q-learning: a robust control approach}

\author[1]{Balázs Varga*}

\author[1]{Balázs Kulcsár}

\author[2]{Morteza Haghir Chehreghani}

\authormark{B. Varga \textsc{et. al.}}

\address[1]{\orgdiv{Department of Electrical Engineering}, \orgname{Chalmers University of Technology}, \orgaddress{\street{Hörsalsvägen 11}, \city{Gothenburg}, \country{Sweden}}}

\address[2]{\orgdiv{Department of Computer Science and Engineering}, \orgname{Chalmers University of Technology}, \orgaddress{\street{Hörsalsvägen 11}, \city{Gothenburg}, \country{Sweden}}}

\corres{*Balázs Varga. \email{balazsv@chalmers.se}}


\abstract[Summary]{This work aims at constructing a bridge between robust control theory and reinforcement learning. Although, reinforcement learning has shown admirable results in complex control tasks, the agent's learning behaviour is opaque. Meanwhile, system theory has several tools for analyzing and controlling dynamical systems. 
This paper places deep Q-learning is into a control-oriented perspective to study its learning dynamics with well-established techniques from robust control. An uncertain linear time-invariant model is formulated by means of the neural tangent kernel to describe learning. This novel approach allows giving conditions for stability (convergence) of the learning and enables the analysis of the agent's behaviour in frequency-domain. The control-oriented approach makes it possible to formulate robust controllers that inject dynamical rewards as control input in the loss function to achieve better convergence properties. Three output-feedback controllers are synthesized: gain scheduling $\mathcal{H}_2$, dynamical $\mathcal{H}_\infty$, and fixed-structure $\mathcal{H}_\infty$ controllers. Compared to traditional deep Q-learning techniques, which involve several heuristics, setting up the learning agent with a control-oriented tuning methodology is more transparent and has well-established literature. The proposed approach does not use a target network and randomized replay memory. The role of the target network is overtaken by the control input, which also exploits the temporal dependency of samples (opposed to a randomized memory buffer).  Numerical simulations in different OpenAI Gym environments suggest that the $\mathcal{H}_\infty$ controlled learning can converge faster and receive higher scores (depending on the environment) compared to the benchmark Double deep Q-learning.}

\keywords{Deep Q-learning, Neural Tangent Kernel, Robust control, Controlled learning}

\jnlcitation{\cname{%
\author{B. Varga}, 
\author{B. Kulcsár}, and
\author{M. H.  Chehreghani}} (\cyear{2022}), 
\ctitle{Deep Q-learning: a robust control approach}, \cjournal{International Journal of Robust and Nonlinear Control}}

\maketitle

\footnotetext{\textbf{Abbreviations:} 
DDQN, Double Deep Q-Learning;
DQN, Deep Q-Network;
LFT, Linear Fractional Transformation; 
LPV, Linear Parameter Varying;
LQ, Linear Quadratic;
LTI, Linear Time-Invariant;
MDP, Markov Decision Process;
ML, Machine Learning;
NTK, Neural Tangent Kernel;
NN, Neural Network;
RL, Reinforcement Learning
}

\section{Introduction}
In the past decade, the success of neural networks (NNs) in various approximation and regression tasks has led to significant uptake of machine learning (ML) methods in various areas of science and real-world applications. On the other hand, working with large data sets, the black-box nature, and complex structure of these function approximators often hamper in-depth human understanding of such methods. Consequently, efforts have been made to improve the transparency of machine learning both in terms of training an ML model and the results produced by the trained model \cite{adadi2018peeking, roscher2020explainable}. Additionally, making such heuristic learning algorithms converge requires tweaking and experimenting. 

Although machine learning-based controllers  often outperform classical control, especially in highly nonlinear environments, their stability and performance are seldom guaranteed analytically \cite{hoel2018automated, zhou2021model}.
Control theory has a well-established and mathematically sound toolkit to analyze dynamical systems and synthesize stabilizing, robust controllers \cite{skogestad2007multivariable}.
This paper focuses on the control theory-based analysis of reinforcement learning (RL). Earlier, some authors dealt with connecting RL with with classical control. \cite{bradtke1994adaptive} shows that dynamic programming based reinforcement learning (Q-learning, in particular) converges to an optimal linear quadratic (LQ) regulator if the environment is a linear system. On the other hand, RL shines in complex environments where formulating a closed-form solution is impossible. Several works deal with synergized model-based and data driven controllers to improve the performance of the controlled process \cite{kretchmar2001robust, hegedus2020handling, wang2020approximate} or analyze learned controllers with tools from control \cite{donti2020enforcing, perrusquia2020robust, liu2020h, wang2022intelligent}. Meanwhile, control theory is seldom utilized to enhance the agent's training performance.

This work is motivated by the lack of convergence guarantees in deep Q-learning \cite{sutton2018reinforcement, van2018deep}.
Some recent advances in DQN modify the temporal difference target in order to achieve better convergence results, e.g.~\cite{pohlen2018observe, durugkar2018td}. \cite{achiam2019towards} aims at characterizing divergence in deep Q-learning with the help of the recently introduced neural tangent kernel (NTK, \cite{jacot2018neural}). In addition, they propose an algorithm that scales the learning rate to ensure convergence. In \cite{ohnishi2019constrained} an additive regularization term is used to constrain the loss and enhance convergence. Yet, the of majority of deep Q-learning applications employ some heuristics such as a target network \cite{mnih2015human} or random experience replay \cite{carvalho2020new}.

Reformulating learning as a dynamical system poses an opportunity to further study its divergent nature and formulate stabilizing controllers to improve learning performance.

\textbf{Contribution.} This work aims at constructing a bridge between robust control theory and reinforcement learning. To this end, techniques from robust control theory are borrowed to compensate the non-convergent behaviour of deep Q-learning via cascade control. Instead of introducing additional control to the agent-environment interaction, the dynamics of learning (the temporal evolution of the Q-function) is studied. 
First, learning is embedded into a state-space framework as an uncertain, linear, time-invariant (LTI) system through the NTK. Based on the dynamical system description, convergence (or stability) can be concluded in a straightforward way. As opposed to \cite{achiam2019towards}, stability is ensured via modifying the temporal difference term via robust stabilizing controllers injecting fictitious rewards. In this paper, three controllers are synthesize and benchmarked: static output-feedback gain scheduling $\mathcal{H}_2$, dynamic $\mathcal{H}_\infty$, and fixed-structure $\mathcal{H}_\infty$ controllers. The primary motivation for robust control is that it is capable of taking into account the uncertain nature of a reinforcement learning problem. In addition, the NTK does not have to be recomputed in every step; its variation can be included as a parametric uncertainty in the controller design. This yields a computationally more efficient methodology than the one proposed in \cite{achiam2019towards}.
The proposed control-oriented approach makes parameter tuning more straightforward and transparent (i.e.~involving fewer heuristics). The two aforementioned common heuristics of deep Q-learning (target network and random experience replay, \cite{carvalho2020new}) are not needed. Instead, the temporal dependency of samples is exploited through the dynamical system formulation. Robust control can support the learning process, making it more explainable. Results suggest that robust controlled learning performs on par with DDQN in the benchmark environments. 

The structure of the paper follows the MAD, {\textit{Modeling}}, {\textit{Analysis}}, and {\textit{Design}} framework. After the preliminaries (Section \ref{sec:Prelim}), the learning dynamics of Q-learning is formulated as an uncertain LTI system (Section \ref{sec:DQN_modeling}). Then, based on the formulated model, three controllers are formulated: Section \ref{sec:LQ_control} formulates an $\mathcal{H}_2$ output-feedback control, in Section \ref{sec:Hinf_control} a dynamical $\mathcal{H}_\infty$ controller is synthesized in frequency domain. Then, in Section \ref{sec:Hinf_fixed_control} the $\mathcal{H}_\infty$ controller design is adjusted to result in a controller with gains akin to the result of the $\mathcal{H}_2$ controller design. The proposed controlled learning approaches are thoroughly analyzed and compared in three challenging OpenAI Gym environments: Cartpole, Acrobot, and Mountain car (Section \ref{sec:experiments}). Finally, Section \ref{sec:conclusions} concludes the findings of this paper.

\section{Preliminaries}
\label{sec:Prelim}
This section consists of two parts. In the first part, deep Q-learning alongside its nomenclature is given. Second, the Neural Tangent Kernel (NTK, \cite{jacot2018neural}) is introduced, which can be used to describe the evolution
of neural networks under gradient descent in function space.

\subsection{Deep Q-learning}
In reinforcement learning, (in the absence of labeled data), the agent learns in a trial and error way, interacting with its environment. The learning agent faces a sequential decision problem and receives feedback as a performance measure \cite{sutton2018reinforcement}. This interaction is commonly depicted as the feedback structure in Figure \ref{fig:RL_setting}.
\begin{figure}[htb!]
\centering
\includegraphics[width=0.2\linewidth]{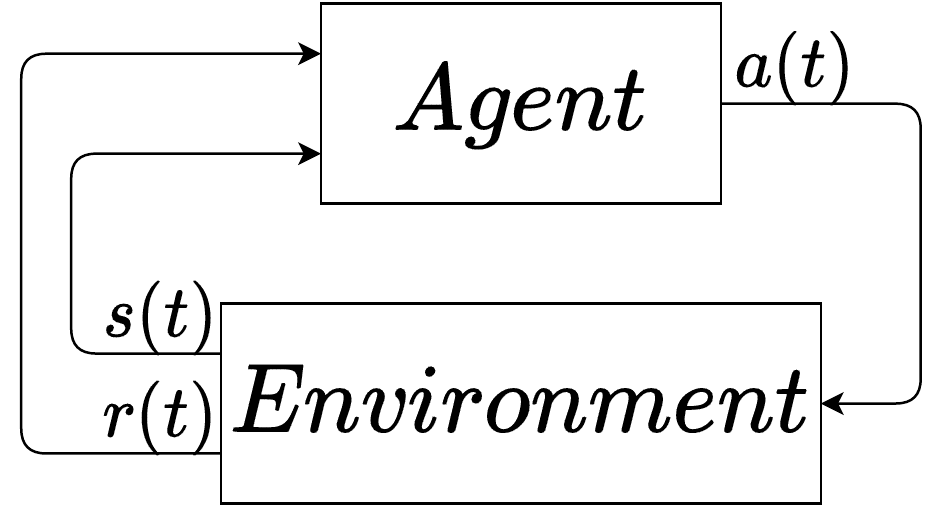}
\caption{Agent–environment interaction in reinforcement learning}
\label{fig:RL_setting}
\end{figure}
This sequential decision problem can be described with a (discrete) Markov Decision Process (MDP) characterized by the following 5-tuple: $(\mathcal{S}, \mathcal{A}, \mathcal{P}_T, R, \gamma)$, where $\mathcal{S} \subseteq \mathbb R^{n_s}$ is the continuous state-space with $n_s$ dimensions. $\mathcal{A} \subset \mathbb{Z}$ is the finite, discrete action space, $\mathcal{P}_T$ is the transition probability matrix, $R \in \mathbb{R}$ is the reward accumulated by the agent, and $\gamma \in ]0,1]$ is the discount factor. The agent traverses the MDP following policy $\pi(a(t)|s(t))$ with discrete time-step $t$. Reinforcement learning methods compute a mapping from the set of states
of the environment to the set of possible actions in order to maximize the expected discounted cumulative reward. 

One common way to tackle an RL problem is Q-learning. Here, the aim is learning the state-action-value (or Q) function - the measure of the overall expected reward for taking action $a(t)$ at state $s(t)$
\begin{equation}
\label{eq:Qdef}
    Q(s(t),a(t)) = \mathbb{E}_\pi \left( \sum_{\tau=0}^{\infty} \gamma^\tau r(t+\tau+1) | s(t), a(t)) \right),
\end{equation}
with $r(t) \in R$ being the immediate reward. In Q-learning the states and actions are discretized and can have huge cardinality. Thus, it suffers from the curse of dimensionality. Deep Q-learning alleviates this problem via approximating the Q-function with a neural network (deep Q-network, DQN). Thus, the Q-function takes an $n_s$ dimensional environment state $s(t) \in \mathcal{S}$ and evaluates the Q-value for action $a(t) \in \mathcal{A}$, $Q: \mathcal{S} \times \mathcal{A} \rightarrow \mathbb{R}$. Then, the policy $\pi(a(t)|s(t))$ selects the action corresponding to the largest Q-value in an $\varepsilon$-greedy way. 
Deep Q-learning learns by minimizing the temporal difference (1-step estimation error) at time $t$ following the quadratic loss function (mean squared Bellman residual \cite{baird1995residual}):
\begin{equation}
\label{eq:loss_0}
 \mathcal L(t) = \frac{1}{2}(\underbrace{r(t) + \gamma \underset{a \in \mathcal{A}}{\mathrm{max}} Q(s'(t),a, \theta(t))}_{target} - \underbrace{Q(s(t), a(t), \theta(t))}_{actual})^2,
\end{equation}
with $s'(t) = s(t+1)$ being the next state. 
Then, with learning rate $\alpha$, the weights of the neural network via gradient descent is
 \begin{align}
\label{eq:theta_update} 
    \theta(t+1)  = \theta(t) - \alpha \frac{\partial \mathcal L(t)}{\partial \theta(t)} = 
    \theta(t) + \alpha (r(t) + \gamma \underset{a \in \mathcal{A}}{\mathrm{max}} Q(s'(t),a, \theta(t)) - Q(s(t), a(t), \theta(t))) \frac{\partial Q(s(t), a(t), \theta(t))}{\partial \theta(t)}^T.
\end{align}
The gradient (assuming the Q-function is differentiable) $\frac{\partial Q(s(t), a(t), \theta(t))}{\partial \theta(t)}^T$ determines the “direction” in
which this update is performed. Observe that the target value $r(t) + \gamma \underset{a \in \mathcal{A}}{\mathrm{max}} Q(s'(t),a, \theta(t))$ also depends on $\theta(t)$. Thus, the correct gradient would be $\gamma \left(\frac{\partial }{\partial \theta(t)}\underset{a \in \mathcal{A}}{\mathrm{max}} Q(s'(t),a, \theta(t))\right)^T - \left(\frac{\partial }{\partial \theta(t)}Q(s(t), a(t), \theta(t))\right)^T$. On the other hand, the mainstream Q-learning algorithms perform the TD update with $\frac{\partial Q(s(t), a(t), \theta(t))}{\partial \theta(t)}^T$, resulting in faster and more stable algorithms \cite{baird1995residual}. In the sequel, this more common approach will be adhered to. 

Deep Q-learning in its pure form often shows divergent behavior for function approximation \cite{sutton2018reinforcement, van2018deep}. It has no known convergence guarantees except for some similar algorithms where convergence results have been obtained \cite{fan2020theoretical}.
Two major ideas have been developed to improve (but not guarantee) its convergence: using a target network (Double deep Q-learning, DDQN) and employing experience replay \cite{mnih2015human}.
In Double deep Q-learning, the target network is the exact copy of the actual network but updated less frequently. Freezing the target network prevents the target value from changing faster than the actual Q-value during learning. Intuitively, learning can become unstable and lose convergence if the target changes faster than the actual value.
With experience replay, a memory buffer is introduced. Samples are drawn randomly from this buffer, thus
minimizing the correlation between samples observed in trajectory-based learning and enabling the use of supervised learning techniques that assume sample independence \cite{carvalho2020new}.

\subsection{Neural Tangent Kernel}
\label{sec:NTK}
This section briefly defines the NTK and lists some of its relevant properties. 
\begin{definition}
\label{def:NTK}
\textbf{Neural Tangent Kernel \cite{jacot2018neural}.} Given data $x_i, \; x_j \in X \subseteq \mathbb{R}^n$, the NTK of an
$n$ input $1$ output artificial neural network $f(x, \theta(t)): \; \mathbb{R}^n \rightarrow \mathbb{R}$, parametrized with $\theta(t)$, is 
\begin{equation}
    \Theta(x_i,x_j) =\left( \frac{\partial f(x_i,\theta(t))}{\partial \theta(t)} \frac{\partial f(x_j, \theta(t))}{\partial \theta(t)}^T \right)  \in \mathbb{R}. 
\end{equation}
\end{definition}
\begin{remark}
\textbf{Multiple outputs.} From the NTK perspective, a neural network with $n$ outputs behaves asymptotically (towards the infinite width limit) like $n$ networks with scalar outputs trained independently. I.e., the diagonal elements of the NTK will dominate.
\end{remark}
\begin{remark}
\textbf{Constant kernel.} Although, $\theta(t)$ is changing during training, in the infinite width limit, the NTK converges to an explicit constant kernel. It only depends on the depth, activation function, and parameter initialization variance of an NN. In other words, during training, $\Theta(x_i,x_j)$ is independent of time $t$. 
\end{remark}
\begin{remark}
\textbf{Linear dynamics.} In the infinite width limit, an NN can be well described throughout training by its first-order Taylor expansion (i.e.,~linear dynamics) around its parameters at initialization ($\theta(0)$), assuming a low learning rate \cite{Lee2020Wide}:
\begin{equation}
\label{eq:Taylor}
    f(x,\theta(t)) \approx f(x, \theta(0)) + \frac{\partial}{\partial \theta} f(x, \theta(0))(\theta(t) - \theta(0))
\end{equation}
and $x \in X$.
\end{remark}
\begin{remark}
\textbf{Gradient flow.} 
The NTK describes the evolution of neural networks under gradient descent in function space. Under gradient flow (continuous learning with infinitely low learning rate via gradient descent), the weight update is given as
\begin{equation}
    \frac{d \theta(t)}{dt} = \frac{\partial \mathcal{L}(f(x, \theta(t)))}{\partial \theta(t)},
\end{equation}
with an at least once continuously differentiable (w.r.t $\theta(t)$) arbitrary loss function $\mathcal{L}(f(x, \theta(t)))$ and $x \in X$. 
Then, with the help of the chain-rule and gradient flow, the learning dynamics in Eq.~\eqref{eq:Taylor} becomes
\begin{align}
\frac{d f(x, \theta(t))}{d t}& = \frac{\partial f(x, \theta(t))}{\partial \theta(t)}\frac{d \theta(t)}{dt} =\frac{\partial f(x, \theta(t))}{\partial \theta(t)}\frac{\partial \mathcal{L}(f(x, \theta(t)))}{\partial \theta(t)} \\ \nonumber & = \frac{\partial f(x, \theta(t))}{\partial \theta(t)}\frac{\partial f(x, \theta(t))}{\partial \theta(t)}^T\frac{\partial \mathcal{L}(f(x, \theta(t)))}{\partial f(x, \theta(t))} = \Theta(x,x)\frac{\partial \mathcal{L}(f(x, \theta(t)))}{\partial f(x, \theta(t))}.
\end{align}
\end{remark}

\section{Control-oriented modeling of deep Q-learning}
\label{sec:DQN_modeling}
In this section, deep Q-learning is translated into a dynamical system. 
In light of the properties of the NTK (Section \ref{sec:NTK}), shallow and wide neural networks are used for approximating the Q-function. That is to exploit its constant nature and the opportunity to linearize the learning dynamics.

Assuming $Q_1(t) = Q(s(t), a(t), \theta(t))$ is the actual Q-value and $Q_2(t) = Q(s'(t),a'(t), \theta(t))$ is the next Q-value with $a'(t) = \underset{a \in \mathcal{A}}{\mathrm{argmax}}\, Q(s'(t),a, \theta(t))$ are the system states, Q-learning can be modeled with uncertain continuous, linear time-invariant dynamics. 
Let the NTK of the deep Q-network be evaluated at the current state of the environment $s(t)$ for output $a(t)$. Denote it as $\Theta_1 = \Theta((s(t)|a(t)),(s(t)|a(t)))$. Similarly, for the next state as $\Theta_2 = \Theta((s(t)|a(t)),(s'(t)|a'(t)))$. The role of $\Theta_1$, and $\Theta_2$ is to characterize how $Q_1(t)$ and $Q_2(t)$ will evolve during learning according to Remark 4. In addition, denote the bounded uncertainty block encompassing unmodeled learning behaviour by $\Delta \in \mathbb{R}^{2 \times 2}$, $||\Delta||_\infty < \infty$, where $||\cdot||_\infty$ denotes the infinity norm.  
\begin{theorem}
\textbf{Dynamics of deep Q-learning.}
Deep Q-learning can be modeled as a continuous-time, linear time-invariant system with output multiplicative uncertainty with the help of the NTK as
\begin{equation}
\label{eq:q_learning_uncontrolled_state_space}
\begin{gathered}
    \begin{bmatrix}
\frac{d Q_1(t)}{d t}\\ 
\frac{d Q_2(t)}{d t}\\ 
\end{bmatrix}
=
\left(
\begin{bmatrix}
-\Theta_1 & \gamma \Theta_1 \\ 
-\Theta_2 & \gamma \Theta_2 \\
\end{bmatrix} + \Delta \right)
\begin{bmatrix}
Q_1(t)\\ 
Q_2(t)
\end{bmatrix}
+
\begin{bmatrix}
\Theta_1\\ 
\Theta_2
\end{bmatrix} r(t),  \\
    y(t) = \begin{bmatrix}
    1 & 0 \\ 0 & 1
    \end{bmatrix}\begin{bmatrix}
Q_1(t)\\ 
Q_2(t)
\end{bmatrix}.
\end{gathered}
\end{equation}
\end{theorem}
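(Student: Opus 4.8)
The plan is to read off the state equation \eqref{eq:q_learning_uncontrolled_state_space} directly from the semi-gradient temporal-difference update \eqref{eq:theta_update}, passed to its continuous-time gradient-flow limit, and then to fold the discrepancy between this idealized model and the actual finite-width, discrete-time learning into the uncertainty block $\Delta$. First I would rewrite the mean-squared Bellman residual \eqref{eq:loss_0} in terms of the two scalar network outputs, $\mathcal{L}(t)=\tfrac{1}{2}\bigl(r(t)+\gamma Q_2(t)-Q_1(t)\bigr)^2$, and note that the semi-gradient rule differentiates only the ``actual'' output $Q_1$. Its gradient-flow counterpart is therefore $\dot\theta(t)=\bigl(r(t)+\gamma Q_2(t)-Q_1(t)\bigr)\,\bigl(\partial Q_1/\partial\theta(t)\bigr)^{\top}$, that is, the continuous-time version of \eqref{eq:theta_update}, consistent with the gradient-flow remark of Section \ref{sec:NTK}.

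Next I would differentiate $Q_1$ and $Q_2$ along this flow by the chain rule, $\dot Q_i(t)=\bigl(\partial Q_i/\partial\theta(t)\bigr)\dot\theta(t)$ for $i=1,2$, and substitute $\dot\theta(t)$. The scalar products that appear are precisely the NTK entries defined before the theorem statement: $\bigl(\partial Q_1/\partial\theta\bigr)\bigl(\partial Q_1/\partial\theta\bigr)^{\top}=\Theta_1$ and $\bigl(\partial Q_2/\partial\theta\bigr)\bigl(\partial Q_1/\partial\theta\bigr)^{\top}=\Theta_2$. Collecting terms gives $\dot Q_1=-\Theta_1 Q_1+\gamma\Theta_1 Q_2+\Theta_1 r$ and $\dot Q_2=-\Theta_2 Q_1+\gamma\Theta_2 Q_2+\Theta_2 r$, which is exactly the nominal system matrix and input vector of \eqref{eq:q_learning_uncontrolled_state_space} once stacked into the state $[Q_1(t),Q_2(t)]^{\top}$. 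Since both $Q_1(t)$ and $Q_2(t)$ are evaluated explicitly at every training step, the measurement map is the identity, giving the stated output equation.

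The delicate step — and the one I expect to be the main obstacle — is justifying the additive block $\Delta$ and its boundedness, because this is where the argument is necessarily qualitative rather than a closed-form computation. Here I would appeal to the remarks of Section \ref{sec:NTK}: for a shallow, wide network the NTK is essentially constant in $t$ and the first-order Taylor model \eqref{eq:Taylor} is accurate at small learning rate, so the derivation above is exact only in the infinite-width, gradient-flow limit. For a finite-width network trained with a nonzero step size there are several perturbations to account for: the kernel drifts slowly with $\theta(t)$; the Euler step in \eqref{eq:theta_update} contributes an $O(\alpha)$ discretization error relative to the flow; the semi-gradient discards the $\theta$-dependence of the target term; and $a'(t)=\mathrm{argmax}_{a\in\mathcal{A}}Q(s'(t),a,\theta(t))$ is piecewise constant, so $Q_2$ can jump when the maximizing action switches. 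I would argue that each of these can be rewritten as a state-multiplying correction to the $2\times 2$ system matrix and, using the standing boundedness hypotheses (bounded rewards, bounded Q-values over the compact state space $\mathcal{S}$, $\|\Delta\|_\infty<\infty$), absorbed into a single norm-bounded $\Delta\in\mathbb{R}^{2\times 2}$. This yields the uncertain LTI system in the form claimed; the preceding algebra is routine once the gradient-flow/NTK machinery is in hand, so essentially all of the effort goes into making this lumping argument precise.
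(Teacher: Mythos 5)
Your derivation of the nominal system is the same as the paper's: pass the semi-gradient update \eqref{eq:theta_update} to the gradient-flow limit, apply the chain rule to $Q_1$ and $Q_2$, recognize the inner products $\bigl(\partial Q_i/\partial\theta\bigr)\bigl(\partial Q_1/\partial\theta\bigr)^{\top}$ as $\Theta_1$ and $\Theta_2$, and stack the two ODEs with $r(t)$ as an exogenous input. Where you diverge is in the justification of $\Delta$, which is the third part of the paper's proof. You attribute $\Delta$ mainly to approximation errors of the NTK model itself (finite width, $O(\alpha)$ Euler discretization, the semi-gradient's neglect of the target's $\theta$-dependence, argmax switching). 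The paper instead builds $\Delta=\Delta_Q+\Delta_\Theta+\Delta_{Exp}$ from three RL-specific sources: (i) the fact that $Q_1(t)$ and $Q_2(t)$ are evaluated at environment states $s(t),s'(t)$ that change every step, so the model's states have unmodeled underlying dynamics (an output multiplicative block $\Delta_Q$); (ii) the resulting parametric variation of the NTK, which changes not because $\theta$ drifts but because the data points at which the kernel is evaluated move, overbounded by a convex output multiplicative block $\Delta_\Theta$; and (iii) exploration, under which $Q_2(t)$ need not equal $\max_a Q(s'(t),a,\theta(t))$, giving $\Delta_{Exp}$. Your list largely misses (i), which is the paper's dominant and most RL-specific source, and mischaracterizes the origin of (ii); on the other hand, your lumping argument is no less rigorous than the paper's, since both treatments of $\Delta$ are qualitative overbounding arguments rather than closed-form computations, and the theorem only asserts that $\Delta$ is a bounded $2\times 2$ block. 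The net effect is that your proof establishes the same statement, but a reader following your version would calibrate the size and structure of $\Delta$ (and hence the robustness margins used later for controller synthesis) from the wrong phenomena.
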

\begin{proof}
The proof consists of three parts. First, learning dynamics are formulated for fixed state-action values, and the appearance of the NTK is shown. Then, results are cast into a state-space form for selected Q-values. Finally, the necessity of the uncertainty block and its components are discussed. 

\textit{Part 1: Learning dynamics.} In order to describe the learning as a dynamical system, first, the weight update with quadratic loss (Eq.~\eqref{eq:theta_update}) is translated into continuous-time (gradient flow).  Assume $\frac{\theta(t+1)-\theta(t)}{\alpha}$ is the Euler discretization of $\frac{d \theta}{d t}$ \cite{warmuth1997continuous}. If the learning rate $\alpha \rightarrow 0$ the parameter update in continuous time can be written as 
\begin{equation}
\label{eq:Theta_change_0}
    \frac{d \theta(t)}{d t} = (r(t) + \gamma \underset{a \in \mathcal{A}}{\mathrm{max}} Q(s'(t),a, \theta(t)) - Q(s(t), a(t), \theta(t))) \frac{\partial Q(s(t), a(t), \theta(t))}{\partial \theta(t)}^T.
\end{equation}
Based on Eq.~\eqref{eq:Theta_change_0}, the Q-value evolution at state $s(t)$ for action $a(t)$  can be written with the help of the chain-rule as

\begin{align}
\label{eq:Q_change_self} \nonumber &
    \frac{d Q(s(t), a(t), \theta(t))}{d t} = 
    \frac{\partial Q(s(t), a(t), \theta(t))}{\partial \theta(t)} \frac{d \theta(t)}{d t} = \\ & 
     \left(\frac{\partial Q(s(t), a(t), \theta(t))}{\partial \theta(t)} \frac{\partial Q(s(t), a(t), \theta(t))}{\partial \theta(t)}^T \right) 
     (r(t) + \gamma \underset{a \in \mathcal{A}}{\mathrm{max}} Q(s'(t),a, \theta(t)) - Q(s(t), a(t), \theta(t))).
\end{align}
Since the temporal difference (the first term in the right hand side in Eq.~\eqref{eq:Theta_change_0}) is a scalar, it commutes with the vector $\frac{\partial Q(s(t), a(t), \theta(t))}{\partial \theta(t)}^T$. Therefore, in Eq.~\eqref{eq:Q_change_self}, the term $\left( \frac{\partial Q(s(t), a(t), \theta(t))}{\partial \theta(t)} \frac{\partial Q(s(t), a(t), \theta(t))}{\partial \theta(t)}^T \right)  \in \mathbb{R}$ appears. This is the NTK evaluated at $s(t)$ for action $a(t)$: $\Theta((s(t)|a(t)), (s(t)|a(t)))$, see Definition 1. Note that, in this setting, the scalar product is always non-negative as it is the sum of the squared partial derivatives.

Similarly to Eq.~\eqref{eq:Q_change_self}, the Q-value changes at an arbitrary $s_u(t)$, $a_u(t)$ state-action pair can be computed assuming temporal difference update with data tuple $(s(t), a(t), r(t), s'(t))$: 
\begin{align} \nonumber
    & \frac{d Q(s_u(t), a_u(t), \theta(t))}{d t} = 
    \frac{\partial Q(s_u(t), a_u(t), \theta(t))}{\partial \theta} \frac{d \theta(t)}{d t} = \\& \label{eq:Q_change_other} \Theta((s(t)|a(t)),(s_u(t)|a_u(t))) (r(t) + \gamma \underset{a \in \mathcal{A}}{\mathrm{max}} Q(s'(t),a, \theta(t) - Q(s(t), a(t), \theta(t))).
\end{align}
Based on Eq.~\eqref{eq:Q_change_other}, the evolution of the Q-function is only influenced by the NTK.

In the preliminaries, $s'(t))$ was defined as $s(t+1)$. Assuming continuous-time (gradient flow), $s'(t)$ becomes $s(t+\Delta t)$, and $\Delta t \rightarrow 0$. 
 Next, let the arbitrary $s_u(t)$, $a_u(t)$ state-action pair be the next state $s'(t)$ and the best action at that state, denoted by $a'(t)$. Then, the evolution of $Q(s'(t), a'(t)  \theta(t))$ becomes
\begin{align}
\label{eq:Q_change_next} \nonumber
    & \frac{d Q(s'(t), a'(t), \theta(t))}{d t} = 
    \frac{\partial Q(s'(t), a'(t), \theta(t))}{\partial \theta(t)} \frac{d \theta}{d t} = \\& \Theta((s(t)|a(t)),(s'(t)|a'(t))) (r(t) + \gamma \underset{a \in \mathcal{A}}{\mathrm{max}} Q(s'(t),a, \theta(t)) - Q(s(t), a(t), \theta(t))).
\end{align}

\textit{Part 2: State-space.} Using the simplified notations $Q_1(t) = Q(s(t), a(t), \theta(t))$, $Q_2(t) = Q(s'(t),a'(t), \theta(t))$, $\Theta_1 = \Theta((s(t)|a(t)),(s(t)|a(t)))$, and $\Theta_2 = \Theta((s(t)|a(t)),(s'(t)|a'(t)))$ the two first-order inhomogeneous linear ODEs (Eq.~\eqref{eq:Q_change_self} and Eq.~\eqref{eq:Q_change_next}) can be organized into state-space form with the system states being $[Q_1(t), \; Q_2(t)]^T$ and assuming the reward $r(t)$ is an exogenous signal. Then, the nominal system becomes
\begin{equation}
\label{eq:q_learning_uncontrolled_state_space_nominal}
    \begin{bmatrix}
\frac{d Q_1(t)}{d t}\\ 
\frac{d Q_2(t)}{d t}\\ 
\end{bmatrix}
=
\begin{bmatrix}
-\Theta_1 & \gamma \Theta_1 \\ 
-\Theta_2 & \gamma \Theta_2 \\
\end{bmatrix}
\begin{bmatrix}
Q_1(t)\\ 
Q_2(t)
\end{bmatrix}
+
\begin{bmatrix}
\Theta_1\\ 
\Theta_2
\end{bmatrix} r(t).
\end{equation}
Learning dynamics are characterized by the NTKs $\Theta_1$ and $\Theta_2$ in the coefficient matrices. 

\textit{Part 3: Uncertainties.}
Despite its simple form, this system is inherently uncertain. This uncertainty stems from a single source but manifests in three forms that are specific for reinforcement learning. In contrast to a supervised learning setting, where data is static, in reinforcement learning, data is obtained sequentially as the agent explores the environment.
\begin{itemize}
    \item \textit{Changing environment states.} The system states $Q_1(t)$ and $Q_2(t)$ have unmodeled underlying dynamics as they always correspond to different $s(t)$, $s'(t)$ environment states and actions (recap: $Q: \mathcal{S} \times \mathcal{A} \rightarrow \mathbb{R}$). On the other hand, if slow learning rate is assumed, and the Q-function is smooth, the deviation from the modeled Q-values is bounded. This deviation can be included into the modeling framework as an output multiplicative uncertainty $\Delta_Q = \begin{bmatrix} \Delta_{Q_1} & 0 \\ 0 & \Delta_{Q_2} \end{bmatrix}$, overbounding the temporal variation of the states. It is assumed this uncertainty is proportional to the magnitude of the Q-values. 
    \item \textit{Parametric uncertainty in the NTK.} Dynamically changing environment states cause parametric uncertainty through the NTK. Although the NTK seldom changes during training for wide neural networks (Remark 2), it is only true if the data (where the NTK is evaluated) is static. This is not the case in reinforcement learning: it has to be evaluated for different $(s(t)|a(t))$, $(s'(t)|a'(t))$ pairs in every step. 
    Since both $Q_1(t)$ and $Q_2(t)$ are known, the NTK can be computed in every step. On the other hand, that would lead to a parameter-varying system. On the other hand, since the actual NTK values are only influenced by data, upon initialization of the neural network, it can be evaluated at several environment state pairs to estimate its bounds offline. Parametric uncertainties can form nonconvex regions which can only be handled via robust control by overbounding there regions. To this end, the parametric uncertainty is pulled out from the plant and overbounded by convex and unstructured uncertainty structure. In particular, it is captured with an output multiplicative uncertainty, see Figure \ref{fig:unc_nyquist} This technique is discussed comprehensively in  \cite{zhou1998essentials}. Finally, the following output multiplicative uncertainty structure is defined, 
    $\Delta_\Theta = \begin{bmatrix} \Delta_{\vartheta} & 0.1\Delta_{\vartheta} \\ 0.1\Delta_{\vartheta} & \Delta_{\vartheta} \end{bmatrix}$.
    \begin{figure}[htb!]
    \centering
    \includegraphics[width=0.2\linewidth]{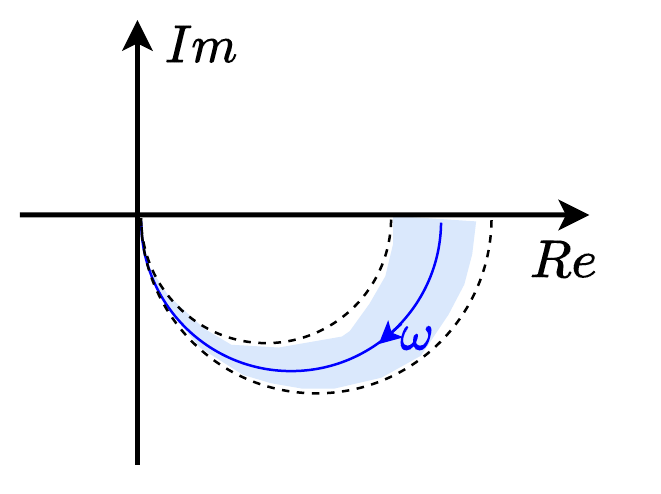}
    \caption{The parametric uncertainty makes the frequency response of the system vary within nonconvex bounds, depicted with blue regions in this Nyquist diagram. The output  multiplicative uncertainty overbounds this variation.}
    \label{fig:unc_nyquist}
    \end{figure}
    \item \textit{Exploration.} Exploration in deep Q-learning means taking an action that do not correspond to the highest Q-value at $s(t)$. Thus, $Q_2(t)$ may not be $\underset{a \in \mathcal{A}}{\mathrm{max}} Q(s'(t),a, \theta(t))$, rather randomly selected $Q$ value. It can be lumped into the previously introduced output multiplicative uncertainty terms as $\Delta_{Exp} = \begin{bmatrix} 0 & 0 \\ 0 & \Delta_{\hat Q_2} \end{bmatrix}$. 
\end{itemize}
Note that none of the uncertainty blocks are time-dependent but bounded. That is because this model proposed to overbound all possible uncertainties in a robust way. Then, all uncertainty components are combined into a single uncertainty block
\begin{equation}
\Delta = \Delta_{Q} + \Delta_{\Theta} + \Delta_{Exp}. 
\end{equation}
Finally, assuming the output of the single input, multiple output system is $Q_1(t)$, and $Q_2(t)$, the uncertain LTI model of deep Q-learning is
\begin{equation}
\begin{gathered}
\begin{bmatrix}
\frac{d Q_1(t)}{d t}\\ 
\frac{d Q_2(t)}{d t}\\ 
\end{bmatrix}
=
\left( \begin{bmatrix}
-\Theta_1 & \gamma \Theta_1 \\ 
-\Theta_2 & \gamma \Theta_2 \\
\end{bmatrix} + \Delta \right) \begin{bmatrix}
Q_1(t)\\ 
Q_2(t)
\end{bmatrix}
+
\begin{bmatrix}
\Theta_1\\ 
\Theta_2
\end{bmatrix} r(t),  \\
    y(t) = \begin{bmatrix}
    1 & 0 \\ 0 & 1
    \end{bmatrix}\begin{bmatrix}
Q_1(t)\\ 
Q_2(t)
\end{bmatrix}.
\end{gathered}
\end{equation}
\end{proof}
Next, through a series of remarks, some properties of this system are outlined. 

\begin{remark} \textbf{Uncertainty structure.} It would be possible to select different error structures for the unmodeled dynamics. For example, an input multiplicative uncertainty would make more sense for the exploration uncertainty. However, for simplicity, it is assumed it is an output multiplicative uncertainty. Alternatively, it could be handled as parametric uncertainty directly via $\mu$-synthesis \cite{stein1991beyond}.
\end{remark}
\begin{conjuncture} \textbf{Nominal stability.}
The stability of the linearized deep-learning dynamics is easy to check. The nominal linear system is stable if the real parts of the $2\times 2$ system matrix's eigenvalues are negative, i.e,
\begin{equation}
eig \left( \begin{bmatrix}
-\Theta_1 & \gamma \Theta_1 \\ 
-\Theta_2 & \gamma \Theta_2 \\
\end{bmatrix}
 \right) = [\lambda_1, \lambda_2], \; \text{if } Re(\lambda_1), \; Re(\lambda_2) < 0 \;  \text{then asymptotically stable}.  
\end{equation}
The state matrix above has one zero eigenvalue, while the other eigenvalue is $\gamma \Theta_2 - \Theta_1$. Thus, the system describing Q-learning is locally asymptotically  stable if $\Theta_1 > \gamma \Theta_2$. The magnitude of the NTK is related to the rate of change of the function approximator during learning. Intuitively, if $Q_2(t)$ (the target) is changing faster (dictated by $\gamma\Theta_2$) than the actual value $Q_1(t)$ (dictated by $\Theta_1$), learning will not converge. This result supports the divergence claim of standard deep Q-learning \cite{van2018deep}.
\end{conjuncture}
\begin{remark}\textbf{Relation to Double deep Q-learning.}
A common remedy for the divergent behavior  of Q-learning is the target network \cite{mnih2015human}. I.e., $Q_2(t)$ is computed from an independent but identical neural network which is less frequently updated. In the control-oriented modeling framework, this would mean a piecewise static $Q_2(t)$, with $\Theta_2 = 0$. Since $\Theta_1 \geq 0$, the state space representation of Double deep Q-learning would be asymptotically stable for all $\Theta_1$. This remark highlights the efficiency of DDQN from an alternative perspective. 
\end{remark}
\begin{remark}{\textbf{Boundedness of the parametric uncertainty.}}
In reinforcement learning, the NTK changes due to the dynamically changing data. Therefore, the bounds of the NTK can be evaluated by computing $\Theta_1$ and $\Theta_2$ for a set of environment states in a grid-based fashion, offline, assuming the environment states are bounded too. Figure \ref{fig:NTK_range}
depicts a slightly different approach: actual state transitions are taken from one of the simulation case studies (Section \ref{sec:cartpole}). This significantly reduces the domain where the NTK is evaluated. In addition, it highlights another important property: $\Theta_1$ and $\Theta_2$ are correlated, since both values are computed with the same kernel. Exploiting this correlation can greatly reduce the range of the parametric uncertainty. 
\begin{figure}[htb!]
\centering
\includegraphics[width=0.5\linewidth]{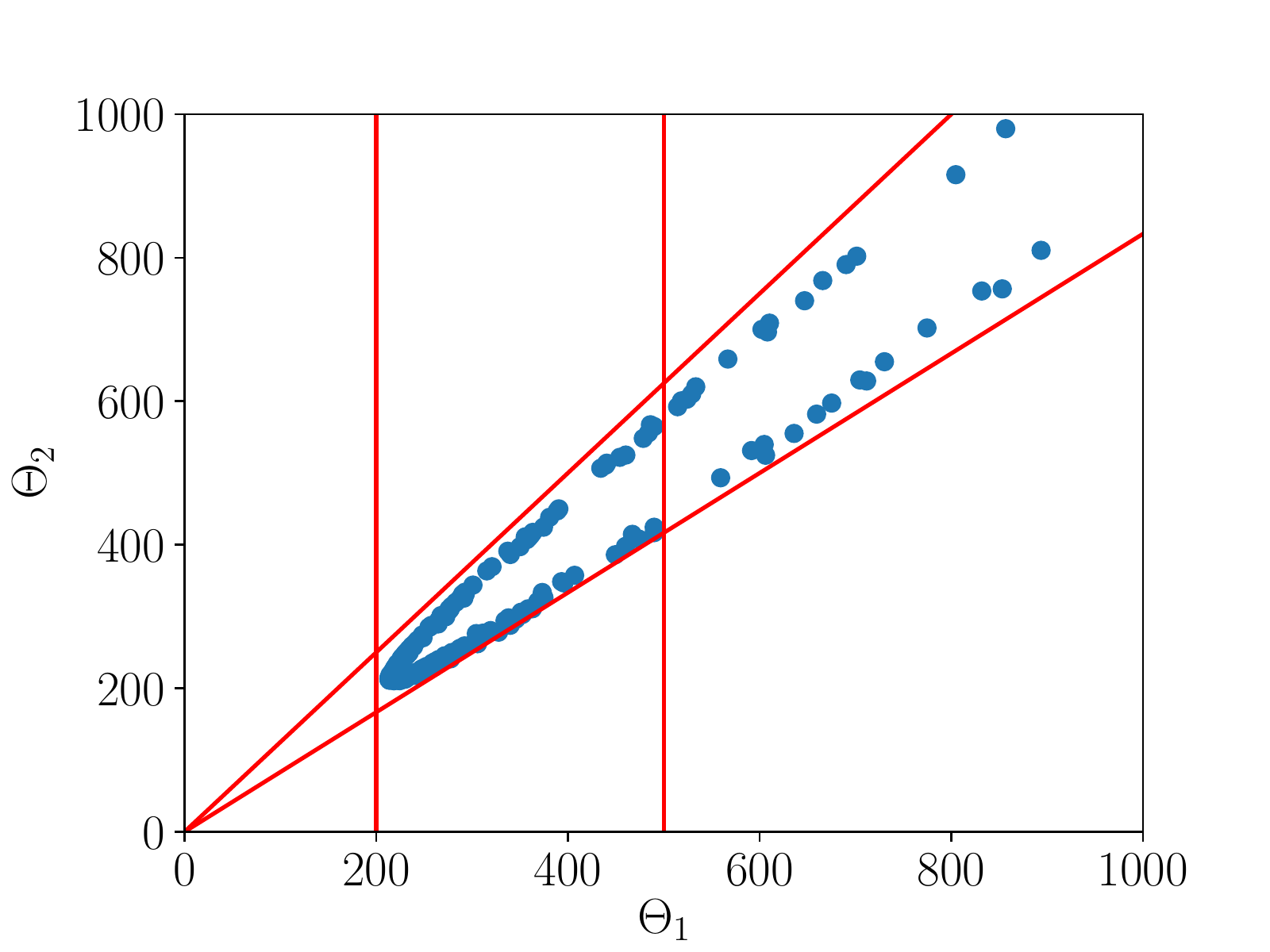}
\caption{Evaluations of the NTK during uncontrolled learning in the Cartpole environment. The NTKs are bounded by the red lines: $200 < \Theta_1 < 500$, $0.8\Theta_1 < \Theta_2 < 1.2\Theta_1$. Note: in the controlled cases, $\Theta_1 > 500$ will be less frequent and only happen when exploring.}
\label{fig:NTK_range}
\end{figure}
\end{remark}
\begin{remark} \textbf{Frequency of the learning.}
Using Fast Fourier Transform (FFT), the frequency content of the agent's input and output signals can be analyzed. Results suggest that these signals are slowly changing. We hypothesize that is due to the smoothness of the Q function (plus low learning rate) and the rewarding scheme; thus learning is in the low-frequency domain (regardless of the environment and control strategy). Figure \ref{fig:FFT} depicts the FFT of $r(t)$, $Q_1(t)$, and $Q_2(t)$ for a controlled Cartpole scenario. 
\begin{figure}[htb!]
\centering
\begin{subfigure}{.25\textwidth}
  \centering
  \includegraphics[width=1\linewidth]{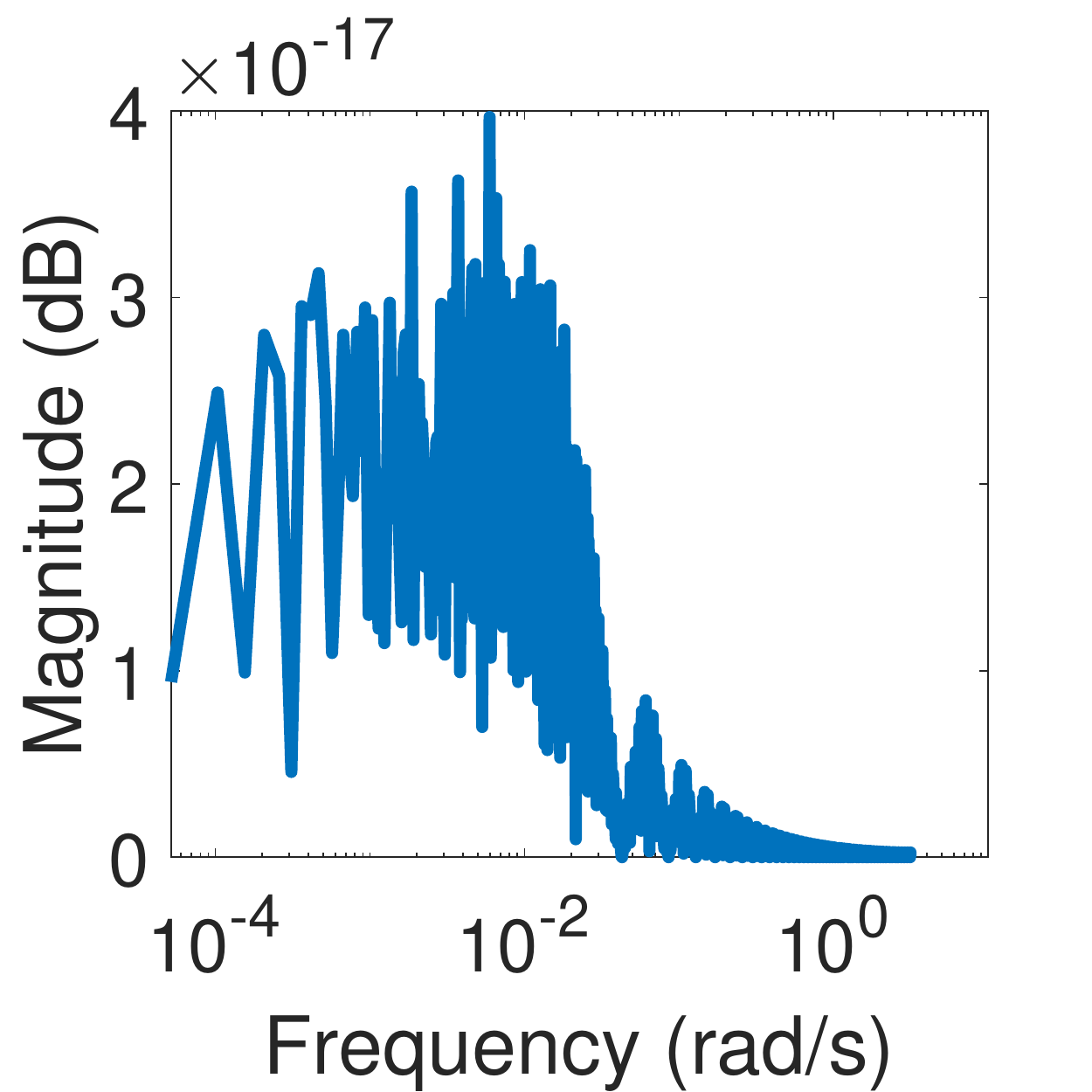}
  \caption{FFT of $r(t)$}
\end{subfigure}%
\begin{subfigure}{.25\textwidth}
  \centering
  \includegraphics[width=1\linewidth]{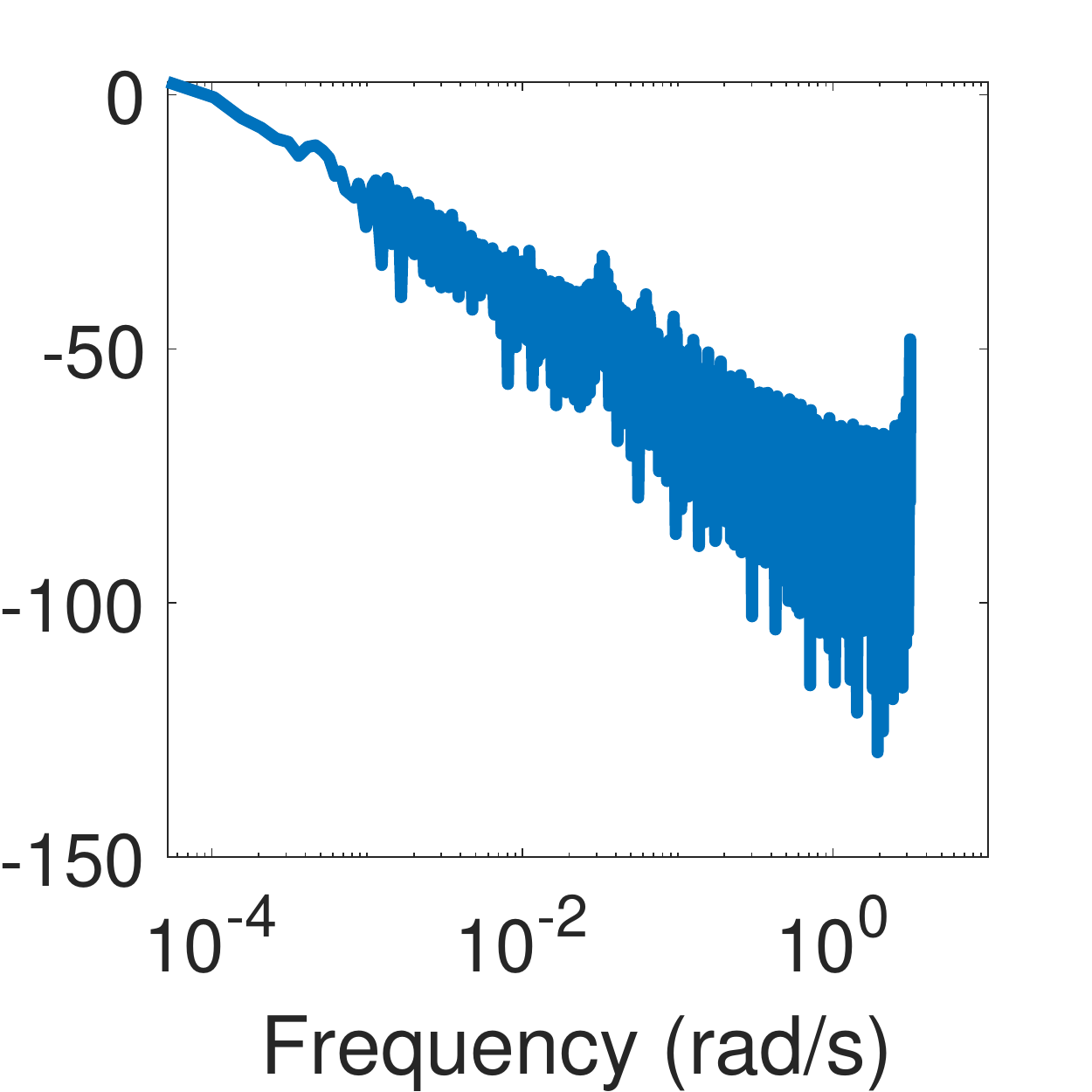}
  \caption{FFT of $Q_1(t)$}
\end{subfigure}
\begin{subfigure}{.25\textwidth}
  \centering
  \includegraphics[width=1\linewidth]{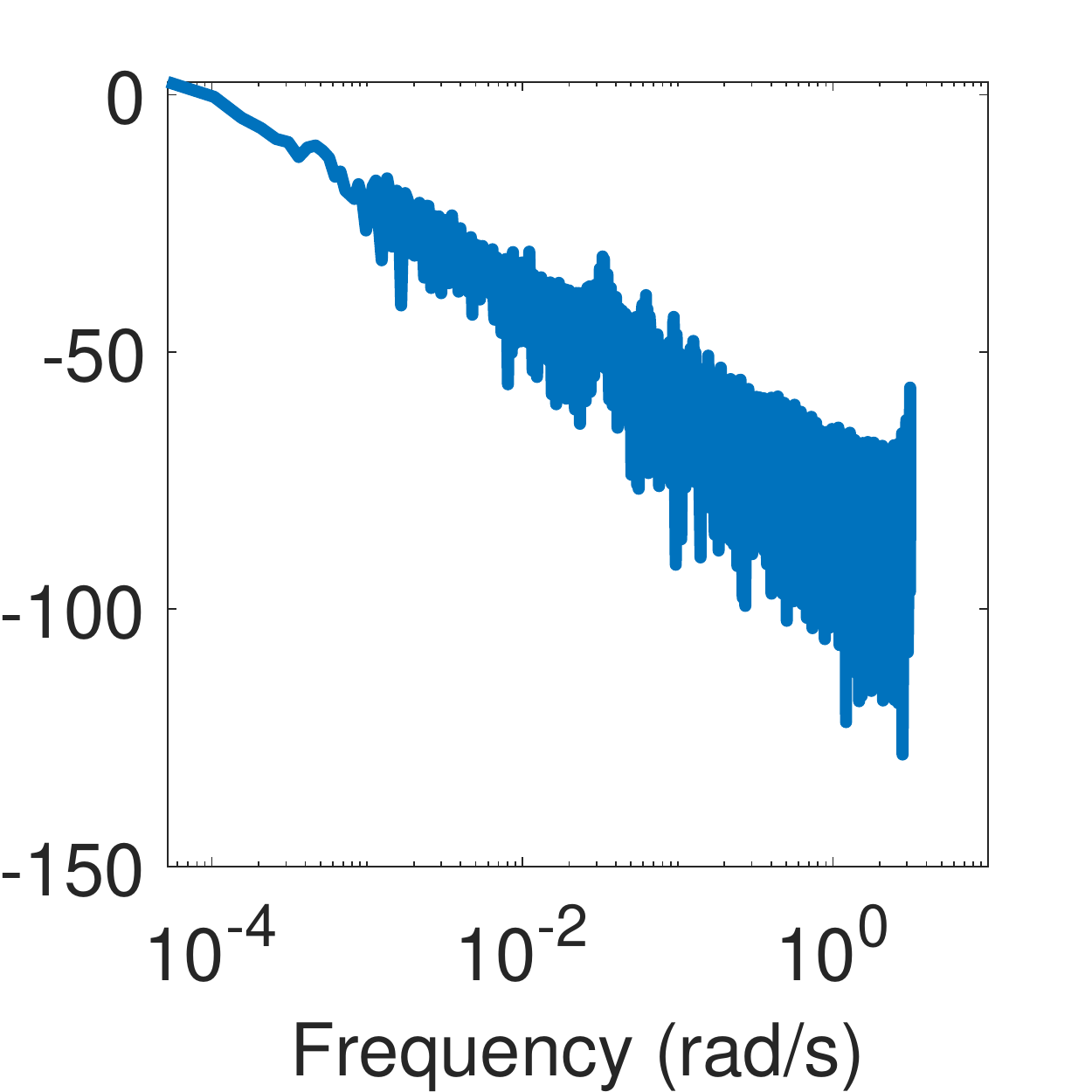}
  \caption{FFT of $Q_2(t)$}
\end{subfigure}
\caption{FFT of the agent's input and output signals in a controlled Cartpole environment.}
\label{fig:FFT}
\end{figure}
\end{remark}
In the next section, stabilizing controllers are formulated based on the linearized learning dynamics (Eq.~\eqref{eq:q_learning_uncontrolled_state_space}).

\section{Explicitly controlled deep Q-learning}
Learning is supported with a cascade control layout that prevents divergent learning behavior for any state-action combination. To this end, the common agent-environment interaction (Figure \ref{fig:RL_setting}) is augmented with an additional feedback controller $K$, as depicted in Figure \ref{fig:dqn_cascade}. In the sequel, the effect of the control on learning is dissected via injecting it into controlled loss functions. In particular,  three different controllers are synthesized and compared: gain scheduling $\mathcal{H}_2$ output-feedback control, dynamic $\mathcal{H}_\infty$, and fixed-structure robust $\mathcal{H}_\infty$. 
\begin{figure}[htb!]
\centering
\includegraphics[width=0.25\linewidth]{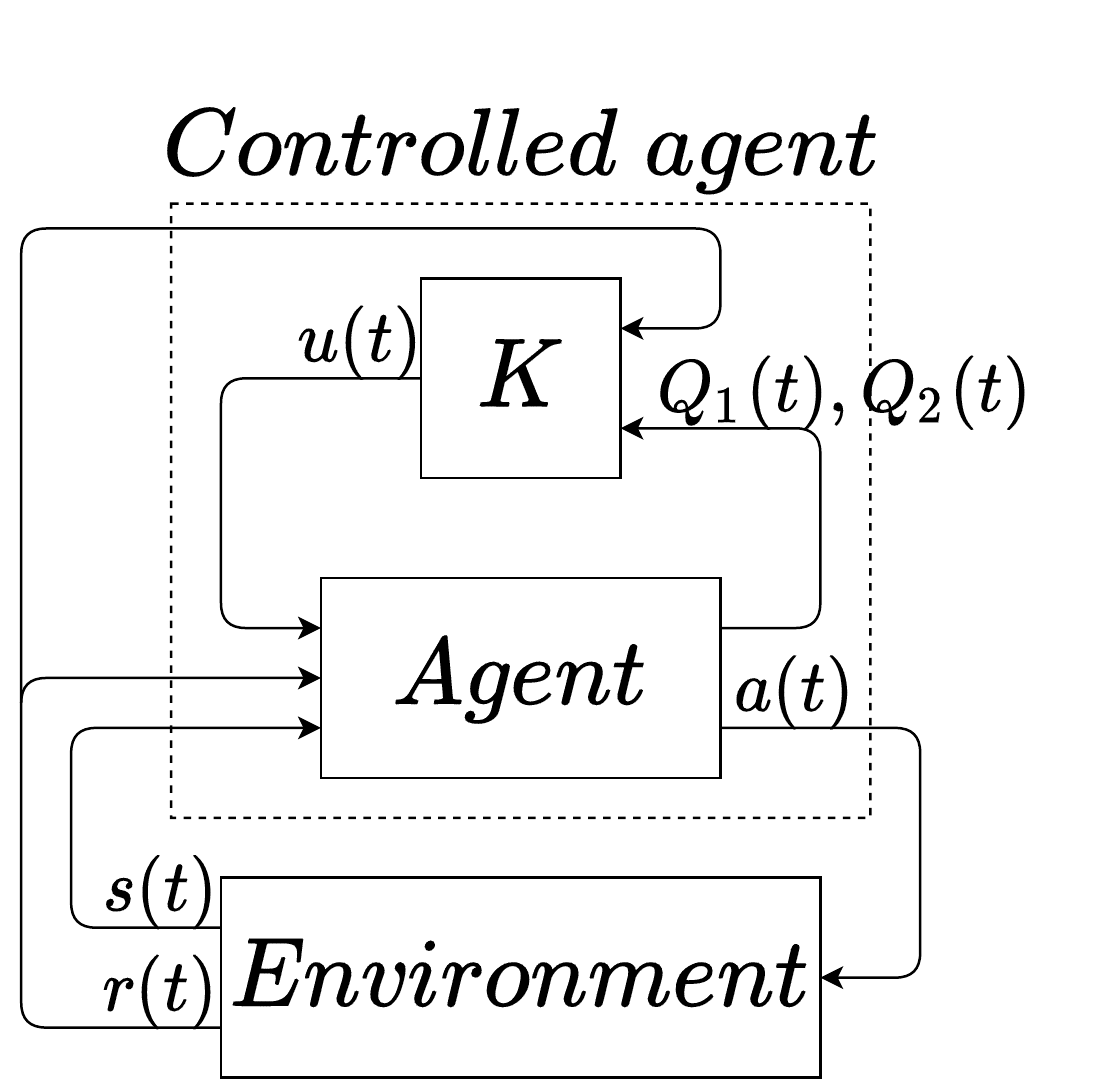}
\caption{Deep Q-learning cascade feedback control}
\label{fig:dqn_cascade}
\end{figure}

\begin{remark} \textbf{Random experience replay.}
Opposed to several Q-learning variants, this approach do not use random experience replay. Instead, the sequential nature of the data is exploited when computing the tracking error. However, it is possible to log episodic trajectories and replay them to the agent to help learning. This method is advantageous in sparse reward environments.
\end{remark}

\subsection{Static output-feedback $\mathcal{H}_2$ controller design}
\label{sec:LQ_control}
The system (Eq.~\eqref{eq:q_learning_uncontrolled_state_space}) can be stabilized via an output-feedback controller. First, the control input $u(t)$ (an additional stabilizing reward) is introduced into the state-space representation of learning as 
\begin{equation}
\label{eq:controlled_Q}
    \begin{bmatrix}
\frac{d Q_1(t)}{d t}\\ 
\frac{d Q_2(t)}{d t}\\ 
\end{bmatrix}
=
\begin{bmatrix}
-\Theta_1 & \gamma \Theta_1 \\ 
-\Theta_2 & \gamma \Theta_2 \\
\end{bmatrix}
\begin{bmatrix}
Q_1(t)\\ 
Q_2(t)
\end{bmatrix}
+
\begin{bmatrix}
\Theta_1\\ 
\Theta_2
\end{bmatrix} r(t) 
+
\Delta 
\begin{bmatrix}
Q_1(t)\\ 
Q_2(t)
\end{bmatrix}
+
\begin{bmatrix}
\Theta_1\\ 
0
\end{bmatrix} u(t).
\end{equation}
As long as $\Theta_1 \neq 0$, the system is controllable \cite{skogestad2007multivariable}.  For control design purposes, create an augmented model via appending a tracking error state $e(t)$ to the state-space. Minimizing $e(t)$ forces the controlled $Q_1(t)$ values to asymptotically converge to the target $r(t) + \gamma \hat Q_2(t)$ (if it is frozen).  The error $e(t)$, $e(0) = 0$ and the extra state $x_e(t)$ are written as
\begin{equation}
    e(t) = \dot x_e(t) = r(t) + \gamma \hat Q_2(t) - Q_1(t),
\end{equation}
and
\begin{equation}
\label{eq:xi}
    x_e(t) = \int_0^{t} r(\tau) + \gamma \hat Q_2(\tau) - Q_1(t) d \tau,
\end{equation}
 respectively. Then, the augmented state-space model  from concatenating Eq.~\eqref{eq:controlled_Q} and Eq.~\eqref{eq:xi} becomes
\begin{equation}
\label{eq_ext_statespace}
    \begin{bmatrix}
\frac{d Q_1(t)}{d t}\\ 
\frac{d Q_2(t)}{d t} \\
\dot x_e(t)
\end{bmatrix}
=
\begin{bmatrix}
-\Theta_1 & \gamma \Theta_1 & 0\\ 
-\Theta_2 & \gamma \Theta_2 & 0\\
-1 & 0 & 0
\end{bmatrix} 
\begin{bmatrix}
Q_1(t)\\ 
Q_2(t)\\
x_e(t)
\end{bmatrix}
+
\begin{bmatrix}
\Theta_1\\ 
\Theta_2\\
1
\end{bmatrix}  r(t) 
+
\Delta_{a}  
\begin{bmatrix}
Q_1(t)\\ 
Q_2(t)\\
0
\end{bmatrix}
+
\begin{bmatrix}
\Theta_1\\ 
0 \\
0
\end{bmatrix}  u(t)
+ 
\begin{bmatrix}
0 \\ 
0 \\
\gamma
\end{bmatrix} \hat Q_2(t),
\end{equation}
where $x_{a}(t) = [Q_1(t),\; Q_2(t),\; x_e(t)]^T$  represent the augmented states, $\Delta_{a} = diag \{ \Delta, \; 0\} \in \mathbb{R}^{3 \times 3}$ is the error augmented error structure, and $\hat Q_2(t) = Q_2(t)$. Note that, no additive error is assumed for the augmented state.

\begin{remark} \textbf{Exogenous $\hat Q_2(t)$.}
If $Q_2(t)$ were included directly, the system would be rank deficient (i.e.,~rows 1 and 3 are multiplies of each other (with factor $\Theta_1$)), yielding a zero eigenvalue. Thus, the system would not be controllable. Consequently, stabilizability is not met either. 
If $\hat Q_2(t)$ is an external signal, it can be chosen freely (e.g.,~as $\hat Q_2(t) = Q_2(t)$) without affecting the dynamical properties of the closed-loop system. 
\end{remark}

An optimal, gain scheduling output-feedback $\mathcal{H}_2$ controller can be realized assuming some properties of the uncertainty block and the external signals. $\mathcal{H}_2$ controller cannot handle the uncertainty block $\Delta_a$ explicitly. Thus, it is handled in two parts: the parametric uncertainty is computed explicitly in every step, while the rest of the uncertainties are neglected. The scheduling parameter $\rho$ captures the variation of $\Theta$ from a nominal one in an affine way. The bounds of $\rho$ stem from Remark 7.  Additionally, it is assumed that there exist a stabilizing controller for every $\rho \neq 0$.
Next, the coefficient matrices of the augmented model in Eq.~\eqref{eq_ext_statespace} are encompassed into $P_a$, and write
\begin{equation}
\left[
     \begin{array}{c}
\frac{d Q_1(t)}{d t}\\ 
\frac{d Q_2(t)}{d t} \\
\dot x_e(t) \\ \hline
z_{p,Q_1}(t) \\
z_{p,Q_2}(t) \\
z_{p,x_e}(t) \\
z_u(t)
\end{array}
\right]
=
\underbrace{
\left[
     \begin{array}{c c c | c c c}
     -\Theta_1(\rho) & \gamma \Theta_1(\rho) & 0 & \Theta_1(\rho) & 0 & \Theta_1(\rho) \\
     -\Theta_2(\rho) & \gamma \Theta_2(\rho) & 0 & \Theta_2(\rho) & 0 & 0 \\
     -1 & 0 & 0 & 1 & \gamma & 0  \\ \hline
     w_{x,Q_1}^{\frac{1}{2}} & 0 & 0 & 0 & 0 & 0 \\
     0 & w_{x,Q_2(t)}^{\frac{1}{2}} & 0 & 0 & 0 & 0 \\
     0 & 0 & w_{x,x_e}^{\frac{1}{2}} & 0 & 0 & 0 \\
     0 & 0 & 0 & 0 & 0 & W_c^{\frac{1}{2}} \end{array}
\right]
}_{P_a}
\left[
     \begin{array}{c}
Q_1(t)\\ 
Q_2(t) \\
x_e(t) \\ \hline
r(t) \\
\hat Q_2(t) \\
u(t)
\end{array}
\right]
\end{equation}
with $z(t) = [z_{p,Q_1}(t),\; z_{p,Q_2}(t),\; z_{p,x_e}(t),\; z_{u}]^T$, and $W_x = diag\{ w_{p,Q_1},\; w_{p,Q_2},\; w_{p,x_e} \}$. Instead of handling the parametric uncertainty in a linear parameter varying (LPV) way, a locally optimal controller is constructed every step via solving the controller design problem repeatedly.
The goal is finding a stabilizing optimal controller $u(t) = - K(\rho)x_a(t)$ that minimizes the lower linear fractional transformation (LFT) $||\mathcal{F}_l(P_a, K(\rho))||_2$ for every $\rho$ in a gain-scheduled manner as
\begin{equation}
    \underset{K(\rho)}{min} ||\mathcal{F}_l(P_a, K(\rho))||_2 = \underset{K(\rho)}{min} \sqrt{\int_0^\infty z^T(t) z(t) dt},
\end{equation}
which turns into the following quadratic optimization problem \cite{skogestad2007multivariable}: 
\begin{equation}
     \underset{K(\rho)}{min} J(x_a(t), u(t)) = \frac{1}{2} \int_0^{\infty} x_a^T(t)^T W_x x_a(t) + u(t)^T W_c u(t) dt. 
\end{equation}
The solution to the above optimization can be given in a closed-form, yielding the he Control Algebraic Ricatti Equation, \cite{kwakernaak1972linear}.

$W_x \succeq 0$, and $W_c > 0$ are positive (semi-)definite diagonal weighting matrices, serving as tuning parameters for the controller. $W_x$ penalizes the performance, including the tracking error, and $W_c$ penalizes the control input. Assigning high diagonal elements to $W_x$ emphasizes on tracking: in this case the error shall be minimized, i.e.,~$w_{p,x_e}$ shall be high compared to the other diagonal elements and $W_c$. The reason behind this is that the Q-values shall not be minimized. The weight for the control input $W_c$ can be kept low too (cheap control \cite{hespanha2018linear}), because the extra reward in the form of $u(t)$ does not have a physical meaning, does not result in excess energy consumption. On the other hand, $u(t)$ acts as an arbitrary reward that  will distort the learning dynamics. 
Denoting the constant elements of the controller as $K = [k_1,\; k_2,\;k_3]^T$, the closed-loop system can be written as
\begin{align} 
    \begin{bmatrix}
\frac{d Q_1(t)}{d t}\\ 
\frac{d Q_2(t)}{d t}\\
\dot x_e(t)
\end{bmatrix}
&= 
\begin{bmatrix}
-\Theta_1(\rho)(1+k_1) & (\gamma-k_2) \Theta_1(\rho) & \Theta_1(\rho) k_3\\ 
-\Theta_2(\rho) & \gamma \Theta_2(\rho) & 0\\
-1 & 0 & 0
\end{bmatrix}
\begin{bmatrix}
Q_1(t)\\ 
Q_2(t)\\
x_e(t)
\end{bmatrix}  +
\begin{bmatrix}
\Theta_1(\rho)\\ 
\Theta_2(\rho)\\
1
\end{bmatrix} r(t) 
+ 
\begin{bmatrix}
0 \\ 
0 \\
\gamma
\end{bmatrix} \hat Q_2(t).
\end{align}
The control input only affects $\frac{d Q_1(t)}{d t}$ directly. Assuming $\hat Q_2(t) = Q_2(t)$, and computing the actual $\Theta_1$ corresponding to $\Theta_1(\rho)$ the Q-value change at $s(t)$, $a(t)$ can be written as 
\begin{equation}
 \label{eq:Q_change_LQ}
 \frac{d Q_1(t)}{d t} =  \Theta_1 \left(r(t) + (\gamma - k_2) Q_2(t) - (1 +  k_1 )Q_1(t) +  k_3 x_e(t) \right)
\end{equation}
The controller gains can be placed inside the parenthesis since the control input acts through $\Theta_1$. $k_1$ and $k_2$ affect the stability of Q-learning, while $k_3$ influences the tracking error.  

Next,  the controlled loss function is calculated based on Eq.~\eqref{eq:Q_change_LQ}. Recall the chain-rule $\frac{d\theta}{d t}= \frac{d \theta}{d Q_1(t)} \frac{dQ_1(t)}{dt}$ and use the definition of the NTK (with detailed notations) $\Theta_1 = \frac{\partial Q(s(t), a(t), \theta)}{\partial \theta} \frac{\partial Q(s(t), a(t), \theta)}{\partial \theta}^T$ to achieve  \begin{align}
 \label{eq:theta_change_LQ}
 \frac{d\theta}{d t} = & \left(r(t) + (\gamma - k_2) \underset{a}{\mathrm{max}} Q(s(t+1),a, \theta) - (1 +  k_1 )Q(s(t), a(t), \theta) +   k_3 x_e(t)
 \right) \frac{\partial Q(s(t), a(t), \theta)}{\partial \theta}^T.
\end{align}
Then, the controlled loss is obtained by integrating Eq.~\eqref{eq:theta_change_LQ} with respect to $\theta$. In order to obtain a similar form to Eq.~\eqref{eq:loss_0}, only $\theta$ dependency is assumed for $Q(s(t), a(t), \theta)$ when performing the integration. This assumption is based on the following arguments.
\begin{itemize}
    \item When evaluating the weight evolution (Eq.~\eqref{eq:loss_0}), the more common direct method is used over a residual gradient method \cite{baird1995residual}, thus the $\theta$ dependency of the temporal difference target \\$r(t) + \gamma \underset{a}{\mathrm{max}} Q(s(t+1),a, \theta)$ is not considered.
    \item The controller gains $k_1$, $k_2$, and $k_3$ are constants. 
    \item The terms in the integral $x_e(t)$ (Eq.~\eqref{eq:xi}) depend only on past values of $\theta$ (implicitly). Therefore, this term can be considered constant when integrating with respect to $\theta$. 
\end{itemize}
Integrating Eq.~\eqref{eq:theta_change_LQ} with respect to $\theta$ and considering the above assumptions the controlled loss becomes
\begin{align} \nonumber
 \mathcal L_{\mathcal{H}_2} = \int_0^\infty &\left(r(t) + (\gamma - k_2) \underset{a}{\mathrm{max}} Q(s(t+1),a, \theta) - (1 +  k_1 )Q(s(t), a(t), \theta) + k_3 x_e(t)
 \right) \frac{\partial Q(s(t), a(t), \theta)}{\partial \theta}^T d\theta  \\  & \; = \frac{1}{2(1 + k_1)}\left(r(t) + (\gamma - k_2) \underset{a}{\mathrm{max}} Q(s(t+1),a, \theta) - (1 +  k_1 )Q(s(t), a(t), \theta) + k_3 x_e(t)
 \right)^2, 
 \label{eq:loss_LQ}
\end{align}
Eq.~\eqref{eq:loss_LQ} is the loss function for the controlled agent. The terms in the loss are weighted by the controller's parameters, helping convergence at the cost of biasing the true Q-values. Note that the controller is designed for the nominal plant without considering all the uncertainties in $\Delta$. Although the controller is conservative, dynamic Q-value stabilization is only guaranteed in a local sense. With this approach, uncertainties in $Q_1(t)$ and $Q_2(t)$ cannot be handled explicitly. On the other hand, the controller is inherently robust up to a multiplicative uncertainty of $0.5$ \cite{marcos2004development}.
The parametric uncertainty is handled in a gain scheduling way. The optimal controller can be recomputed every episode via evaluating the NTK  repeatedly. This is computationally intensive and bears the risk that a parameter combination occurs that cannot be stabilized, rendering the learning divergent. In the sequel, the convergence of deep Q-learning is aided by robust control: instead of considering fixed-parameter combinations, the variations in the parameters and the states are explicitly included in the controller design. 
\subsection{$\mathcal{H}_\infty$ controller design}
\label{sec:Hinf_control}
In this section  two types of robust $H_\infty$ controllers are proposed. First, the controller design procedure is outlined for a generic robust dynamical controller where a linear time-invariant system computes the control input. Second, the structure of the controller is fixed and constant gains are utilized akin to the $\mathcal{H}_2$ output-feedback controller.

Although the parametric uncertainty could be explicitly computed, it is computationally inefficient and would lead to a parameter-varying control as demonstrated for the gain scheduling $\mathcal{H}_2$ case. This inefficiency motivates the formulation of a robust controller: it can be synthesized before learning, and it will be stabilizing during learning for all combinations of states and parameters (see Figure \ref{fig:NTK_range}). 
The $\mathcal{H}_\infty$ design framework is capable of handling every uncertainty in $\Delta$ in a robust way. Furthermore, in the $\mathcal{H}_\infty$ controller design procedure the system's response is shaped via dynamically weighting the inputs and outputs of the system. Therefore, the low-frequency nature of the controlled learning agent can be exploited too.  

The aim is controlling the nominal system $\mathcal{P}$, encompassing Eq.~\eqref{eq:controlled_Q}, disturbed by noise through the $\Delta$ block. The controller $K$ has three inputs: the two noisy system states fed back and the reference signal, identical to the tracking error of the $\mathcal{H}_2$ controlled case: $e(t) = r(t) + \gamma \hat Q_2(t) - Q_1(t)$.  In the $\mathcal{H}_\infty$ design, performance is enforced through the tuneable weights that give the desired shape to the singular values of the open-loop response  Selecting the type of the dynamic weight and their parameters is problem dependent and based on heuristics, but these heuristics are well studied in e.g., \cite{skogestad2007multivariable, wu1995control, marcos2004development}.
\begin{itemize}
    \item $W_u$ penalizes the control input and $W_p$ penalizes the error $e$. As discussed before, learning is done in the low-frequency range. Therefore, good tracking performance (large $W_p$) is desired at low frequencies. Although the control input has no physical interpretation, it should be dynamically weighted too. At higher frequencies, tracking shall be penalized more in order to reduce the singular values.
    \begin{equation}
        W_p(i\omega) = \frac{0.001}{0.1 i\omega + 1},
    \end{equation}
    \begin{equation}
        W_u(i\omega) = \frac{0.01i\omega}{(0.1 i\omega + 1)(0.001 i \omega + 1)}.
    \end{equation}
    Bode diagrams of $W_u$ and $W_p$ are shown in Figure \ref{fig:weight_bode}. Note that it turns out that these weights are universal regardless of the RL environment. 
    \begin{figure}[htb!]
    \centering
    \includegraphics[width=0.5\linewidth]{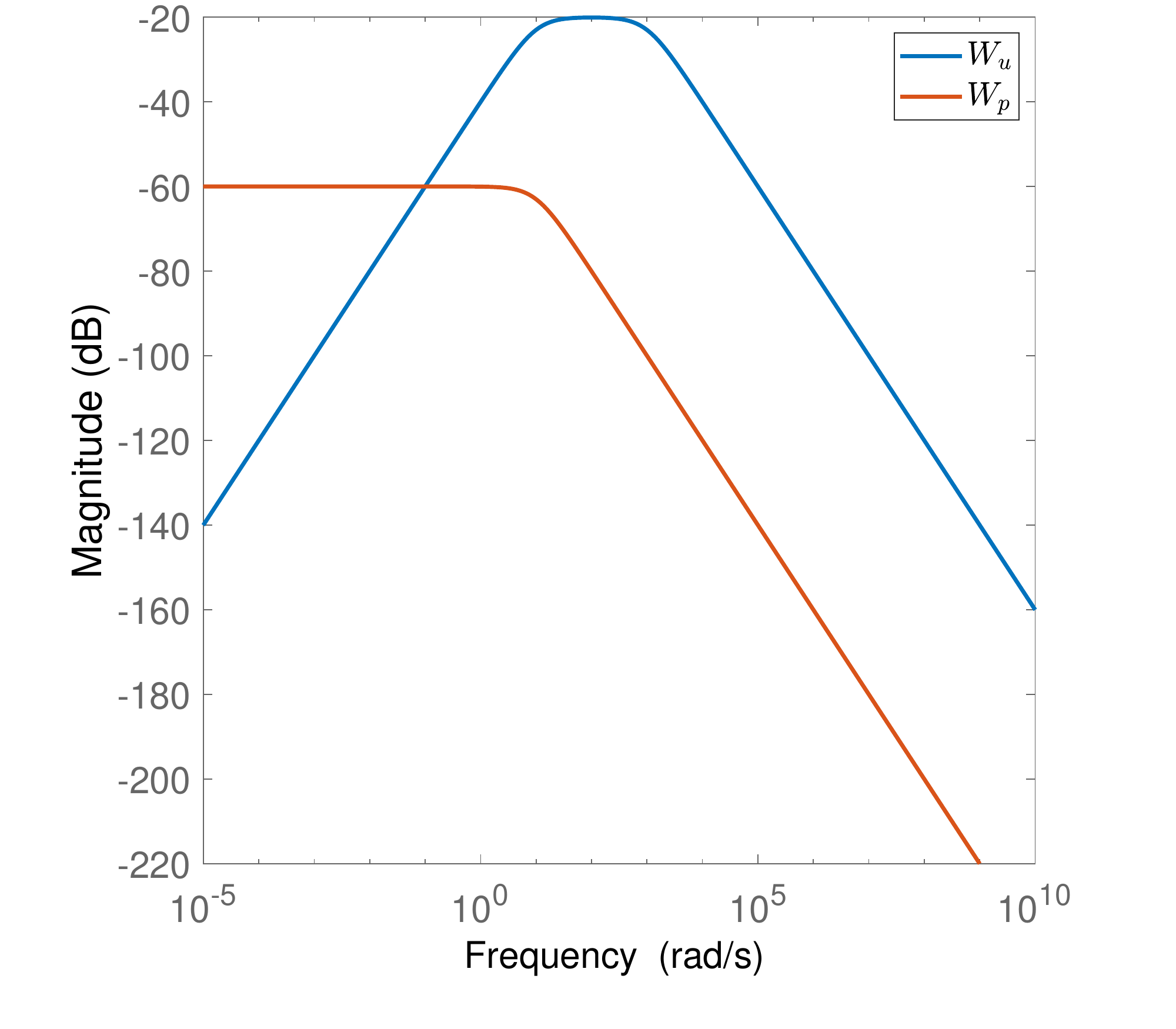}
    \caption{Bode magnitude diagrams of the frequency-dependent tuning weights.}
    \label{fig:weight_bode}
    \end{figure}
    \item $W_\Delta\in \mathbb{R}^{2\times 2}$ shapes the uncertainty. It is considered constant (with varying magnitude from environment to environment) but generally, it can be stated that $\Delta$ is constant at low frequencies, where learning is meaningful. Its magnitude has a peak at an extremely high frequency ($10^{12} rad/s$), which is unimportant for the learning.
    \item The purpose of $W_{r}$ and $W_{\hat Q}$ are to normalize and inject reference signal related dynamism to the reference signals. Here, they are considered frequency-independent with environment-specific magnitude.  
\end{itemize}

The closed-loop system interconnection in the so-called $\Delta-\mathcal{P}-K$ structure, which is the general form of the $\mathcal{H}_\infty$ design, is depicted in Figure \ref{fig:hinfsyn_DPK}. $c^T = [1, \; 0]$ is responsible for selecting $Q_1(t)$. $P_1$, and $P_2$ denote dynamical systems with only the first and second input channel of the nominal plant $P$, respectively. By applying the weighting and the compensator, the augmented plant $\mathcal{P} = \begin{bmatrix} \mathcal{P}_{11} & \mathcal{P}_{12} \\ \mathcal{P}_{21} & \mathcal{P}_{22} \end{bmatrix}$ can be formalized as
\begin{equation}
    \left[
    \begin{array}{c}
    y(t) \\
    z_u(t) \\
    z_p(t) \\ \hline
    e(t) \\
    \tilde y(t)
    \end{array}
    \right] = 
     \left[
     \begin{array}{c c c | c}
     0 & W_rP_1 & 0 & P_2 \\
     0 & 0 & 0 & W_u \\
     -c^T W_\Delta W_p & W_r(1-c^T P_1) W_p & W_{\hat Q}W_p & -c^T P_2 W_p  \\ \hline
     -c^T W_\Delta & W_r(1-c^T P_1) & W_{\hat Q} & -c^T P_2 \\
     W_\Delta & W_r P_1 & 0 & P_2
     \end{array}
     \right]
     \left[
    \begin{array}{c}
     d(t) \\
     r(t) \\
     \hat Q_2(t) \\ \hline
     u(t)
    \end{array}
    \right].
\end{equation}
\begin{figure}[htb!]
\centering
\includegraphics[width=0.5\linewidth]{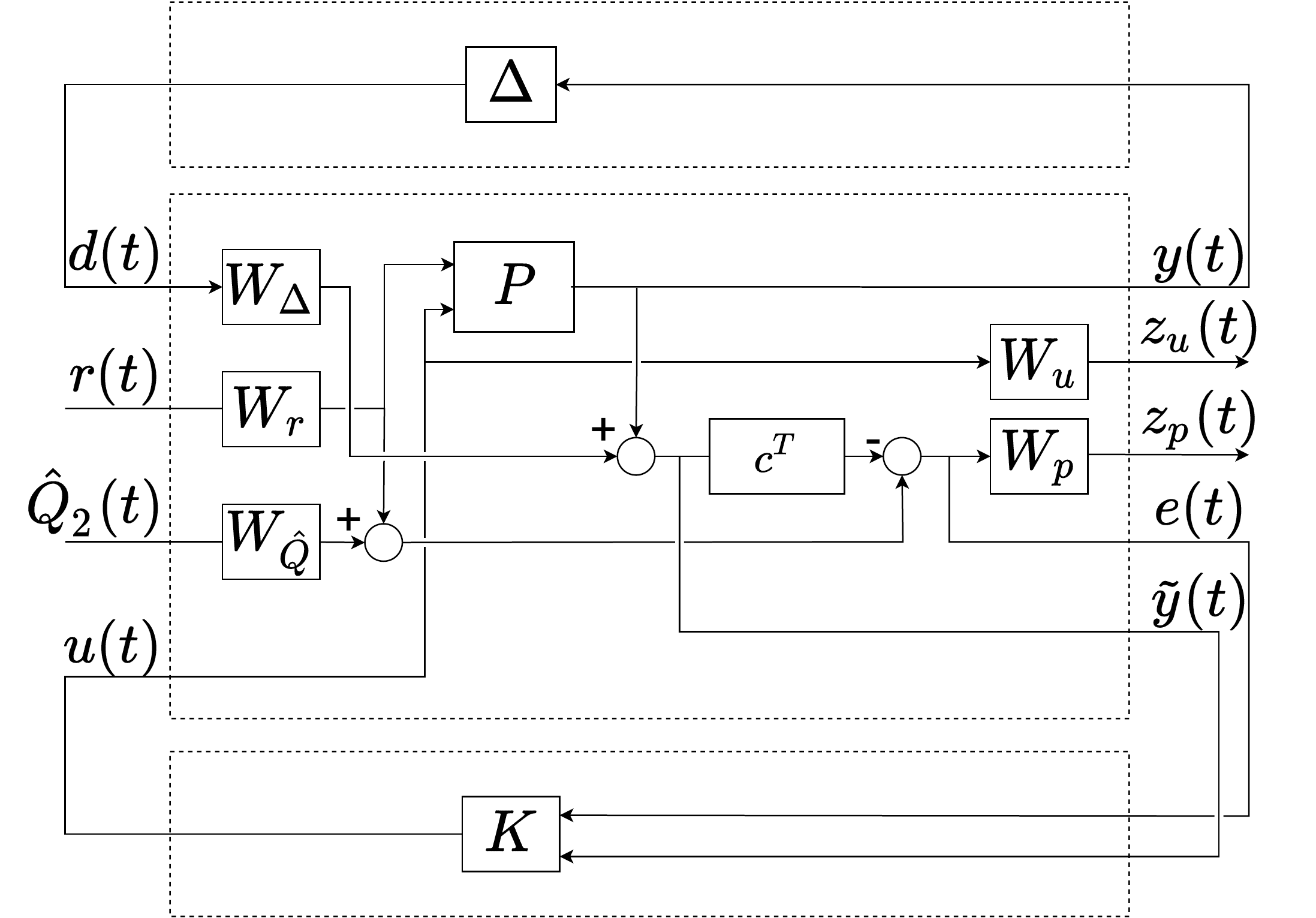}
\caption{Generalized $\Delta-\mathcal{P}-K$ structure}
\label{fig:hinfsyn_DPK}
\end{figure}
The closed-loop transfer function from the exogenous signals to the performance outputs can be expressed provided that the inverse $(I - \mathcal{P}_{22}K)^{-1}$ exists via a lower LFT as:
\begin{equation}
    \begin{bmatrix}
    y(t) \\ z_u(t) \\ z_p(t)
    \end{bmatrix} =
    \mathcal{F}_l(\mathcal{P}, K)
    \begin{bmatrix}
    d(t) \\ r(t) \\ \hat Q_2(t)
    \end{bmatrix},
\end{equation}
where
\begin{equation}
     \mathcal{F}_l(\mathcal{P}, K) = \mathcal{P}_{11} + \mathcal{P}_{12}K (I - \mathcal{P}_{22}K)^{-1} \mathcal{P}_{21}.
\end{equation}
In $\mathcal{H}_\infty$ control, the aim is finding a controller $K$ that minimizes the impact of the disturbance on the performance output. the the induced (worst-case) norm:
\begin{equation}
\underset{K}{min}||\mathcal{F}_l(\mathcal{P}, K)||_\infty < \gamma,
\end{equation}
where $\gamma$ is a prescribed disturbance attenuation level, progressively lowered by iteration
\cite{zhou1998essentials, skogestad2007multivariable}\footnote{Here $\gamma$ denotes a norm, not to be confused with the discount factor.}
. 

The resulting controller is an LTI system with 3 inputs  $u_c(t) \in \mathbb{R}^3$ ($\tilde y(t) \in \mathbb{2}$, $e(t)$) and one output $u(t)\in\mathbb{R}^1$. The synthesized controller is given in state-space form as 
\begin{equation*}
\dot x_c(t) = A_c x_c(t) + B u_c(t),
\end{equation*}
\begin{equation}
u(t) = c_c^T x_c(t) + D_c u_c(t),
\end{equation}
where the dynamical controller has several internal states $x_c(t)$. The controller is characterized by the matrices $A_c$ (state matrix), $B_c$ (input matrix, $c_c^T$ (output matrix), and $D_c$ (feed-through matrix). Sizes of these matrices depend on the number of internal states of the controller which is the result of the iterative control design process \cite{zhou1998essentials}.
When employing the controller in the learning context, the internal states are reset after each episode.

Next, the controlled learning loss is computed with the same assumptions as for the $\mathcal{H}_2$ case. The controlled evolution of $Q_1(t)$ is 
\begin{equation}
\label{eq:controlled_Q1_hinf}
    \frac{\partial Q_1(t)}{\partial t} = \Theta_1 \left( r(t) + \gamma Q_2(t) - Q_1(t) + u(t)\right).
\end{equation}
If the target is independent of $\theta$ and the control signal $u(t)$ only indirectly depends on the states of the learning agent, the controlled quadratic loss can be written as 
\begin{equation}
\label{eq:loss_Hinf}
    \mathcal L_{\mathcal{H}_\infty} = \frac{1}{2}\left(r(t) + \gamma\underset{a}{\mathrm{max}} Q(s(t+1),a, \theta) - Q(s(t), a(t), \theta) + u(t)\right)^2
\end{equation}
by using the chain-rule on Eq.~\eqref{eq:controlled_Q1_hinf} and integrating it w.r.t. $\theta$. 

\subsection{Fixed-structure output-feedback $\mathcal{H}_\infty$ controller design}
\label{sec:Hinf_fixed_control}
With the previously presented method, a dynamical $\mathcal{H}_\infty$ controller can be achieved in a general LTI structure. With the same technique, it is possible to optimize the free parameters of a fixed-structure controller. I.e.~define a structure for $K$ with tuneable parameters. This tuning minimizes the $\mathcal{H}_\infty$ norm of the closed-loop transfer function, resulting in a suboptimal controller. On the other hand, learning is in the low-frequency range (Figure \ref{fig:FFT}). Therefore, a static controller in place of a dynamic one should perform identically well. 
In the fixed-structure $\mathcal{H}_\infty$ controller design,  output-feedback gains $k_1$, $k_2$  and $\frac{k_3}{s}$ are sought, such that the closed-loop system is stable and the performance outputs (tracking error ($z_p(t)$), control energy ($z_u(t)$)) are minimized. Thus, the output $u(t)$ of the fixed-structure output-feedback $\mathcal{H}_\infty$ controller is
\begin{equation}
    u(t) =  - k_1 Q_1(t) - k_2 Q_2(t) + k_3 x_e(t).
\end{equation}
The loss function for the $\mathcal{H}_\infty$ controlled learning can be achieved with the same steps as for the $\mathcal{H}_2$ controller (Eqs.~\eqref{eq:Q_change_LQ}, \eqref{eq:theta_change_LQ}, \eqref{eq:loss_LQ}), and its structure will be similar too: 
\begin{align}
    \mathcal L_{\mathcal{H}_{\infty,f}} =  \frac{1}{2(1 + k_1)}\Big(  &  (r(t) + (\gamma - k_2) \underset{a}{\mathrm{max}} Q(s(t+1),a, \theta) - (1 +  k_1 )Q(s(t), a(t), \theta) + k_3 x_e(t)
 \Big)^2.
\end{align}
The difference is the way how the controller gains are achieved. The gains in the fixed-structure $\mathcal{H}_\infty$ controller consider the uncertainties implicitly. Thus they do not have to be recomputed every step. Therefore, it combines the best of two worlds: the simplicity of the $\mathcal{H}_2$ controller and the robustness of the dynamical $\mathcal{H}_\infty$ controller. On the other hand, it is suboptimal compared to the dynamical $\mathcal{H}_\infty$ controller.

\begin{remark}
\textbf{Tuneable parameters, heuristics, and transparency.} Reinforcement learning falls into the category of heuristics. In classical RL methods, the effect of modifying a tuneable parameter on learning is often unclear (e.g., NN structure, learning rate, replay buffer size, etc.). With the robust control approach it is possible to make such parameter tuning more transparent and procedural.

The NTK-based prediction of Q-value change is only valid for shallow and wide NNs with a low learning rate. This narrows down the choice of the function approximator. Value of the NTK (characterizing the nominal model and the parametric uncertainty) is determined by two factors: the structure of the NN and the magnitude of the environment states. In addition, the control-oriented tuning methodology is transparent and has well-established literature, making weight selection straightforward. Regardless of the environment, learning is in the low-frequency range, making the selection of $W_u$ and $W_p$ easier. The constant weights $W_\Delta$, $W_r$, and $W_{\hat Q}$ are used to normalize the input signals. 

The absence of target network and randomized replay memory abolishes some additional heuristics. 
\end{remark}

\section{Experiments}
\label{sec:experiments}
The three algorithms are tested on three environments from the OpenAI Gym with an increasing complexity: Cartpole \cite{barto1983neuronlike}, Acrobot \cite{sutton1996generalization}, and Mountain car \cite{brockman2016openai} (Figure \ref{fig:envs}). On each domain, the proposed algorithms are benchmarked against Double deep Q-learning \cite{mnih2015human}. 
In every experiment, the learning agent is a 2 layer fully-connected ReLU network with 2500 neurons with bias terms and appropriate input-output sizes. That is to comply with the assumptions in \citep{jacot2018neural}, i.e.~a shallow and wide neural network. The learning rate is $\alpha = 0.00005$, and the discount factor $\gamma = 1$.  In the following subsections, each environment will be described in detail, and the performance of the agents will be evaluated.
\begin{figure}[htb!]
\centering
\includegraphics[width=0.95\linewidth]{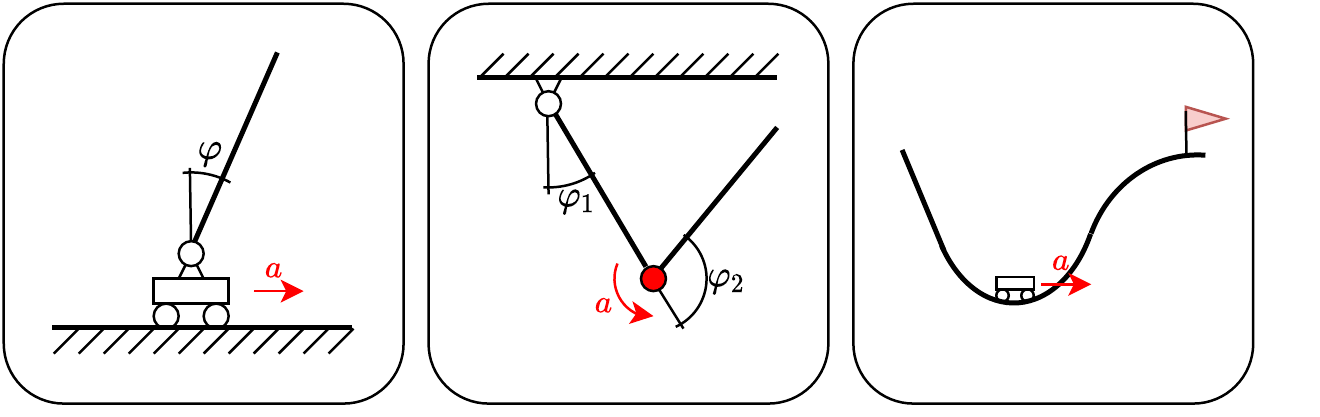}
\caption{Illustration of the environments. Left: Cartpole, middle: Acrobot, right: Mountain car.}
\label{fig:envs}
\end{figure}

\subsection{Cartpole}
\label{sec:cartpole}
The cartpole problem (also known as the inverted pendulum) is a common benchmark in control theory. The goal is to balance a pole to remain upright by horizontally moving the cart. The agent in this environment can take two actions: accelerating the cart left ($a^0$) or right ($a^1$). The state-space is characterized by four features: the position of the cart, its velocity, the pole angle, and the pole angular velocity. In the reinforcement learning setting, the agent's goal is to balance the pole as long as possible. A +1 reward is given for every discrete step if the pole is in vertical direction, and the episode ends if the pole falls or successfully keeps balancing for 200 steps. OpenAI defines the pass criteria for this Gym environment as an average reward of 195 for 100 episodes.

Figure \ref{fig:Cartpole_learning} depicts the convergence of the deep Q-learning augmented with three different controllers in the cartpole environment. Both algorithms can solve the environment and eventually reach the target moving average reward of 195. The dynamic $\mathcal{H}_\infty$ converges the fastest with the least standard deviation, followed by the fixed-structure $\mathcal{H}_\infty$ controller (denoted by $\mathcal{H}_{\infty,f}$ in the plots), and the DDQN. The $\mathcal{H}_2$-controlled agent gets stuck in a local optimum. 
While designing the $\mathcal{H}_2$ controller, an interesting observation was made. The diagonal elements of weighting matrix $W_x$ penalize $Q_1(t)$, $Q_2(t)$, and $x_e(t)$, respectively. The best result could be achieved when the second element, i.e.,~the one penalizing the magnitude of $Q_2(t)$ was high. This suggests, while maintaining stability and tracking, the change of $Q_2(t)$ (the target) was slowed down, yielding a similar (but continuous) behavior to learning with a target network. Despite its good performance, the $\mathcal{H}_2$ controller has obvious drawbacks: learning is stabilized only in a local, one-step-ahead sense. In addition, the NTK has to be recomputed after every step, making it computationally intensive. On the other hand, it eliminates the need to give a bound for $\Theta_1$, and $\Theta_2$, which is crucial for the two robust approaches. In addition, the standard deviation decreases, and oscillations are eliminated as the learning progresses. This is not the case for DDQN, which is known to be prone to oscillations \cite{ruder2016overview}. 
\begin{figure}[htb!]
    \centering
    \includegraphics[width=0.5\linewidth]{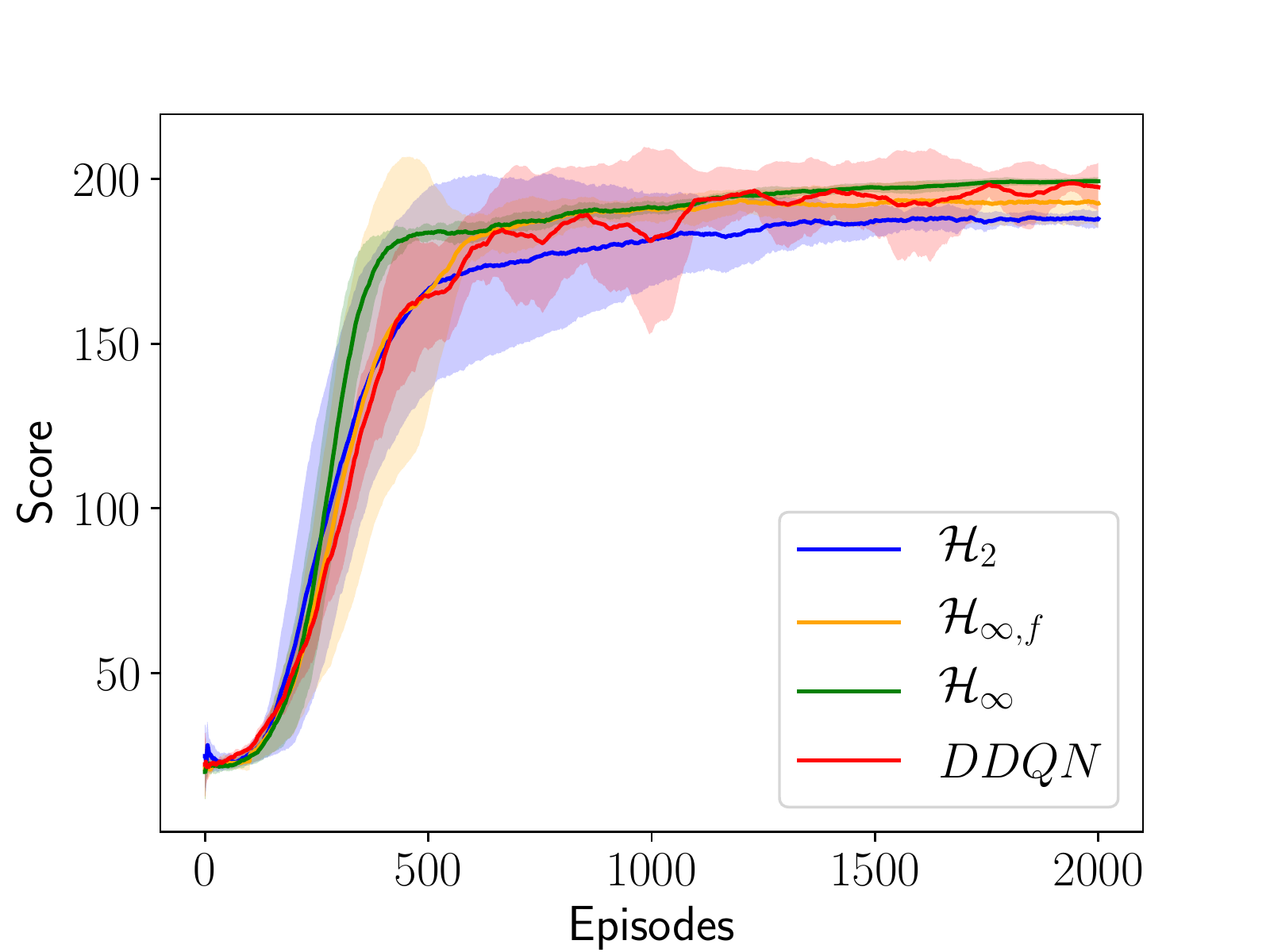}
    \caption{Learning with different methods in the Cartpole environment, average of 10  random seeds.}
    \label{fig:Cartpole_learning}
\end{figure}

Next, an in-depth analysis of the $\mathcal{H}_\infty$-controlled learning is performed in terms of loss, $Q$ value evolution (tracking), control input, and the range of the parameter variation. Naturally, as the agent's policy converges to the optimal one, the loss decreases, Figure \ref{fig:Cartpole_loss}. Since the NN is overparametrized, the loss will become near-zero. However, there are some peaks: the loss can become very high when the agent encounters a previously unvisited state (via exploration or a near-failure state). That is because the agent does not know what to do and takes a wrong action. 
\begin{figure}[htb!]
    \centering
    \includegraphics[width=0.5\linewidth]{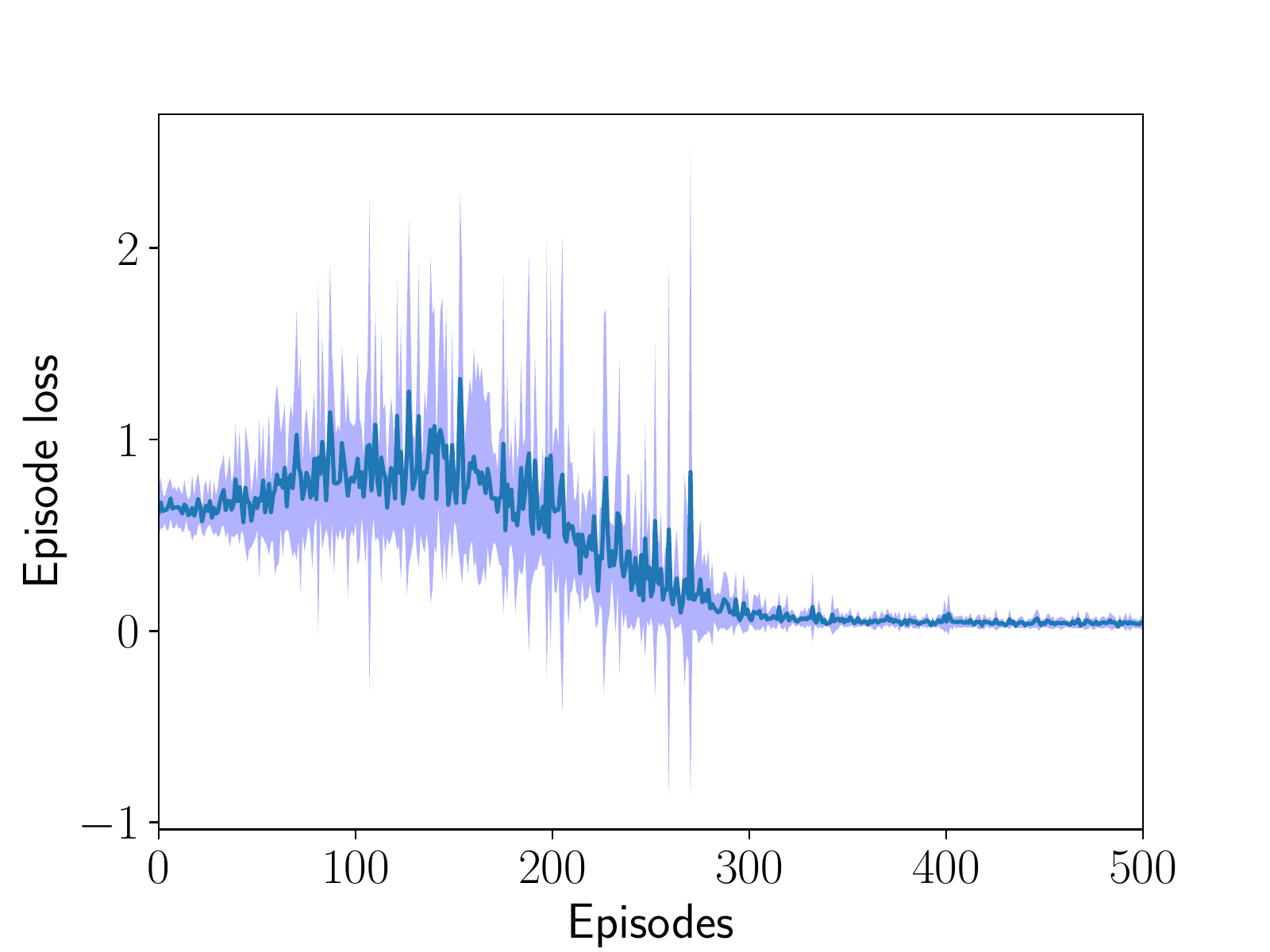}
    \caption{Loss in in the Cartpole environment with $\mathcal{H}_\infty$ control, average of 10 random seeds.}
    \label{fig:Cartpole_loss}
\end{figure}
$Q$ values are the states of the learning model. Figure \ref{fig:Cartpole_Q} depicts the convergence of $Q_1(t)$ and $Q_2(t)$. It can be observed that Q-values converge to a much lower value than what one would expect in a tabular Q-learning case. Since the maximum cumulated reward is 200, according to Eq.~\eqref{eq:Qdef} the true Q-values for an optimal state-action pair should be 200 (assuming $\gamma = 1$). On the other hand, it is well known that DQN tends to overestimate Q-values. The Q-values for the DDQN benchmark converge to $\approx$295. The bias in the controlled learning stems from two sources: one is the smoothing property of function approximation.  $Q_1(t)$ and $Q_2(t)$ are very close to each other, confirming the smoothness of the Q-function.
Additionally, the controller integrates the tracking error. This has a substantial smoothing property on the Q-function too. In addition, the control input $u(t)$ acts as a dynamically changing bias in the loss (Eq.~\eqref{eq:loss_Hinf}).
Zooming in to one episode (Figure \ref{fig:Cartpole_Q1}), Only some minor oscillations of $Q_1(t)$ could be seen. That is because $Q_1(t)$ equals $Q_2(t)$'s previous value if the agent follows a greedy policy. At the beginning of the episode, it is not the case because the control input is high at that time which makes the two $Q$ values drift from each other, see Figure \ref{fig:Cartpole_controlinput1}. The control input (Figure \ref{fig:Cartpole_controlinput}) has peaks at the start of each episode. It is due to the reset of the controller's internal states. Note that the control input peaks are in the same magnitude range as the actual Q-values.
Finally, Figure \ref{fig:Cartpole_theta} shows the NTK values during learning, with the bounds prescribed in Figure \ref{fig:NTK_range}. It is outside the prescribed range in the first few episodes, where exploration is more significant. After the learning it starts converging, $\Theta_1$ values remain within the bounds, used for robust controller design. 
\begin{figure}[htb!]
\centering
\begin{subfigure}{.45\textwidth}
  \centering
    \includegraphics[width=0.99\linewidth]{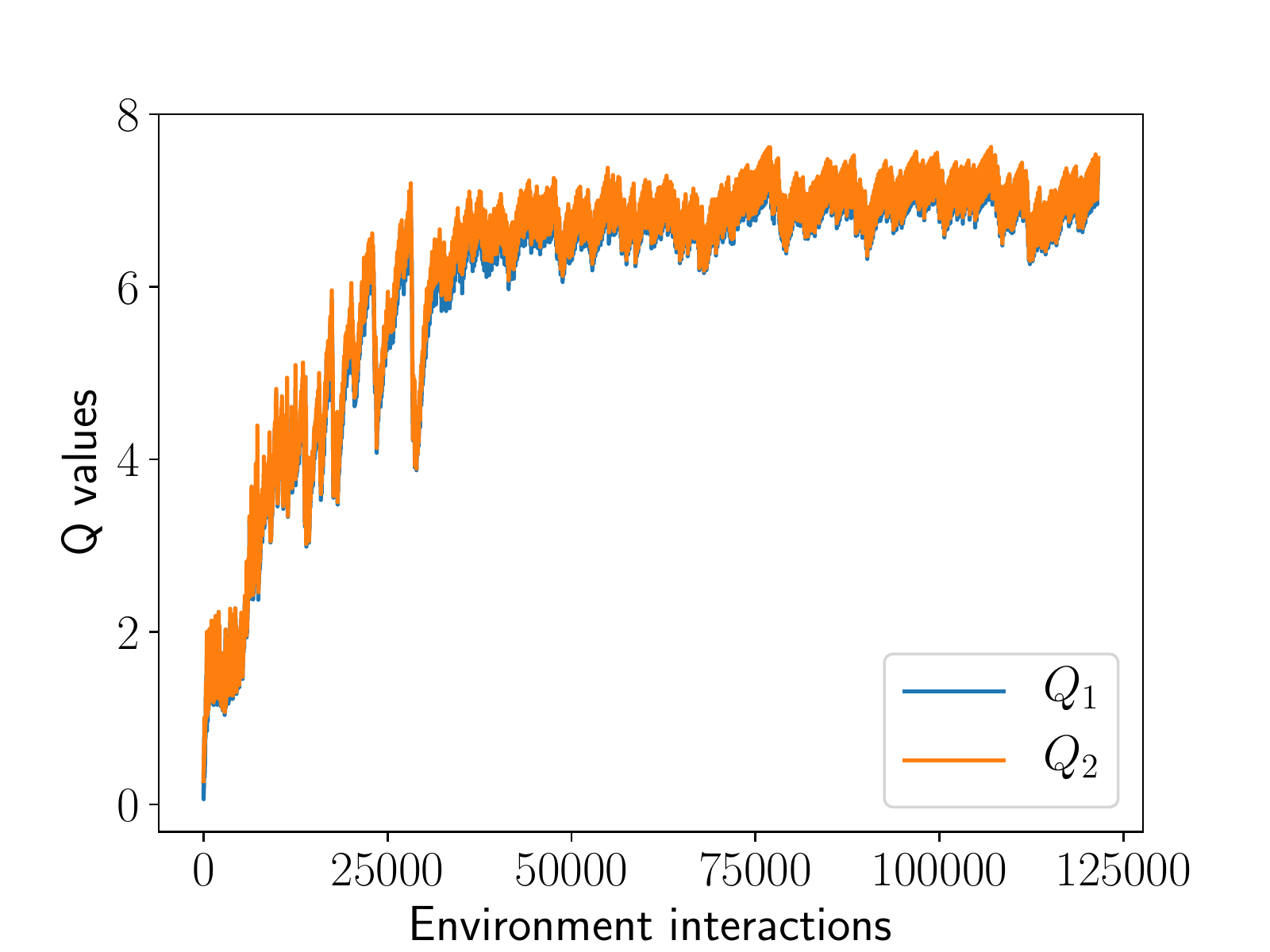}
    \caption{500 episodes.}
    \label{fig:Cartpole_Q500}
\end{subfigure}%
\begin{subfigure}{.45\textwidth}
  \centering
    \includegraphics[width=0.99\linewidth]{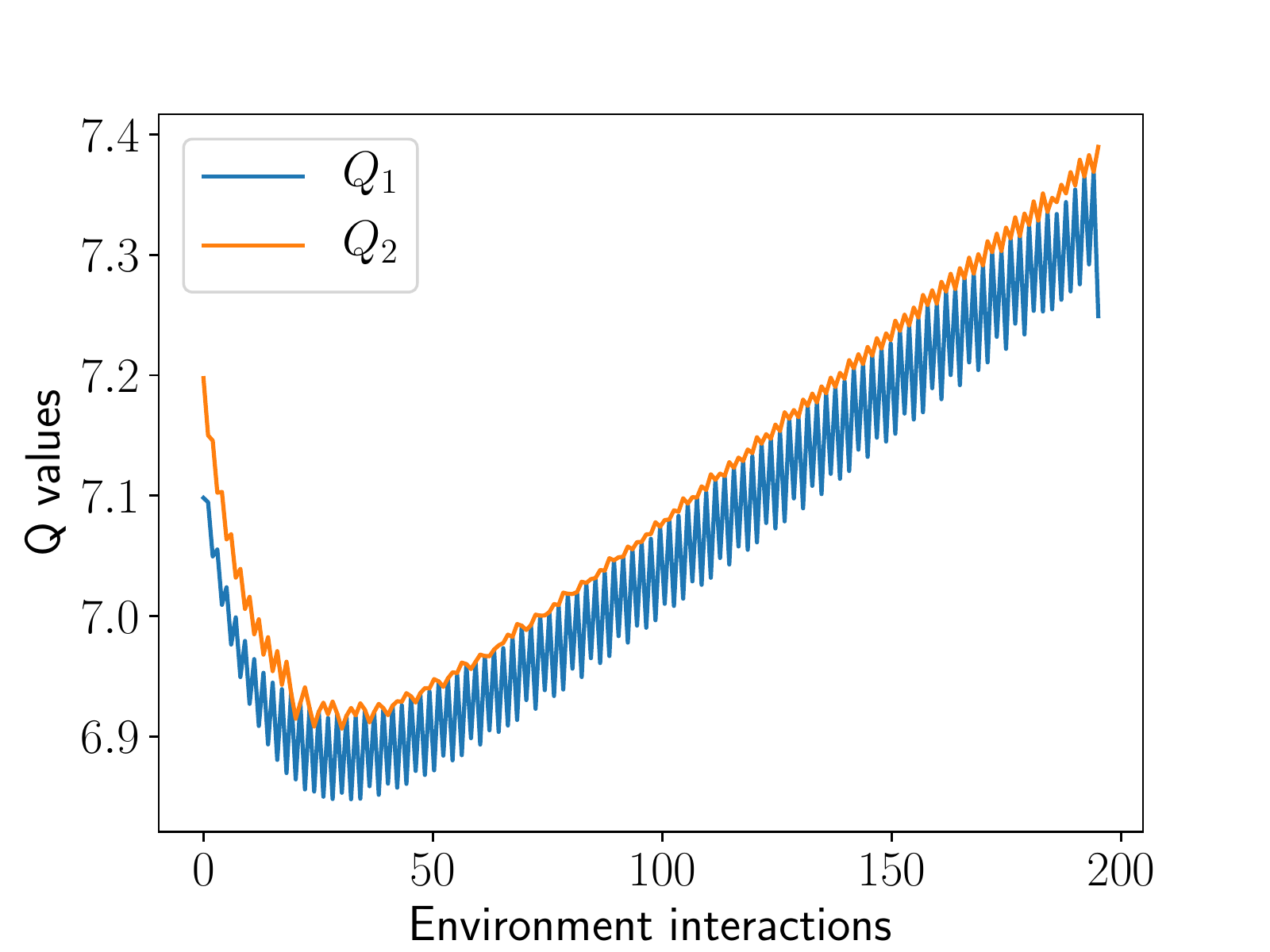}
    \caption{One episode.}
    \label{fig:Cartpole_Q1}
\end{subfigure}
\caption{Q-values in the Cartpole environment with $\mathcal{H}_\infty$ controller.}
\label{fig:Cartpole_Q}
\end{figure}

\begin{figure}[htb!]
\centering
\begin{subfigure}{.45\textwidth}
  \centering
    \includegraphics[width=0.99\linewidth]{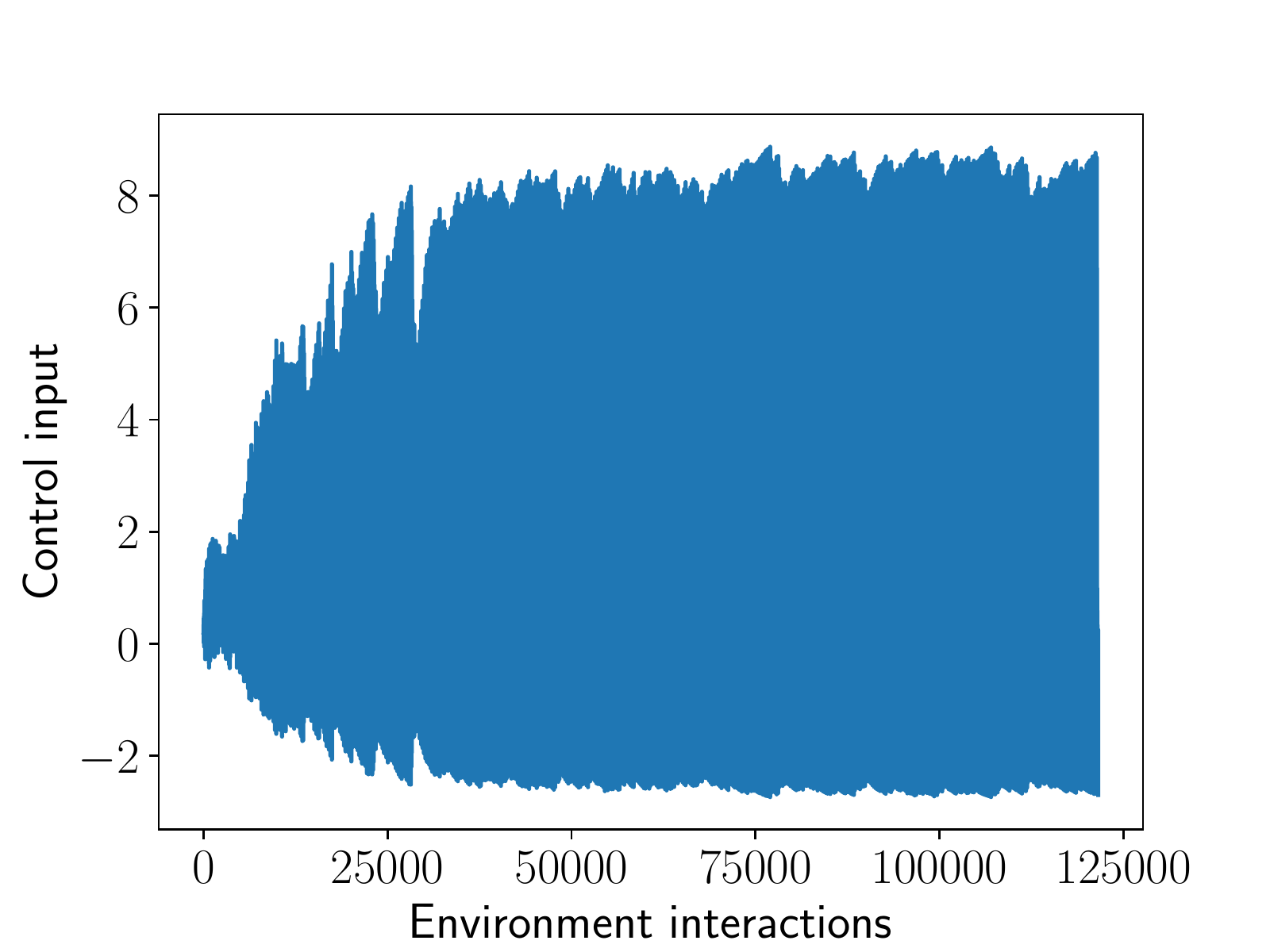}
    \caption{500 episodes.}
\end{subfigure}%
\begin{subfigure}{.45\textwidth}
  \centering
    \includegraphics[width=0.99\linewidth]{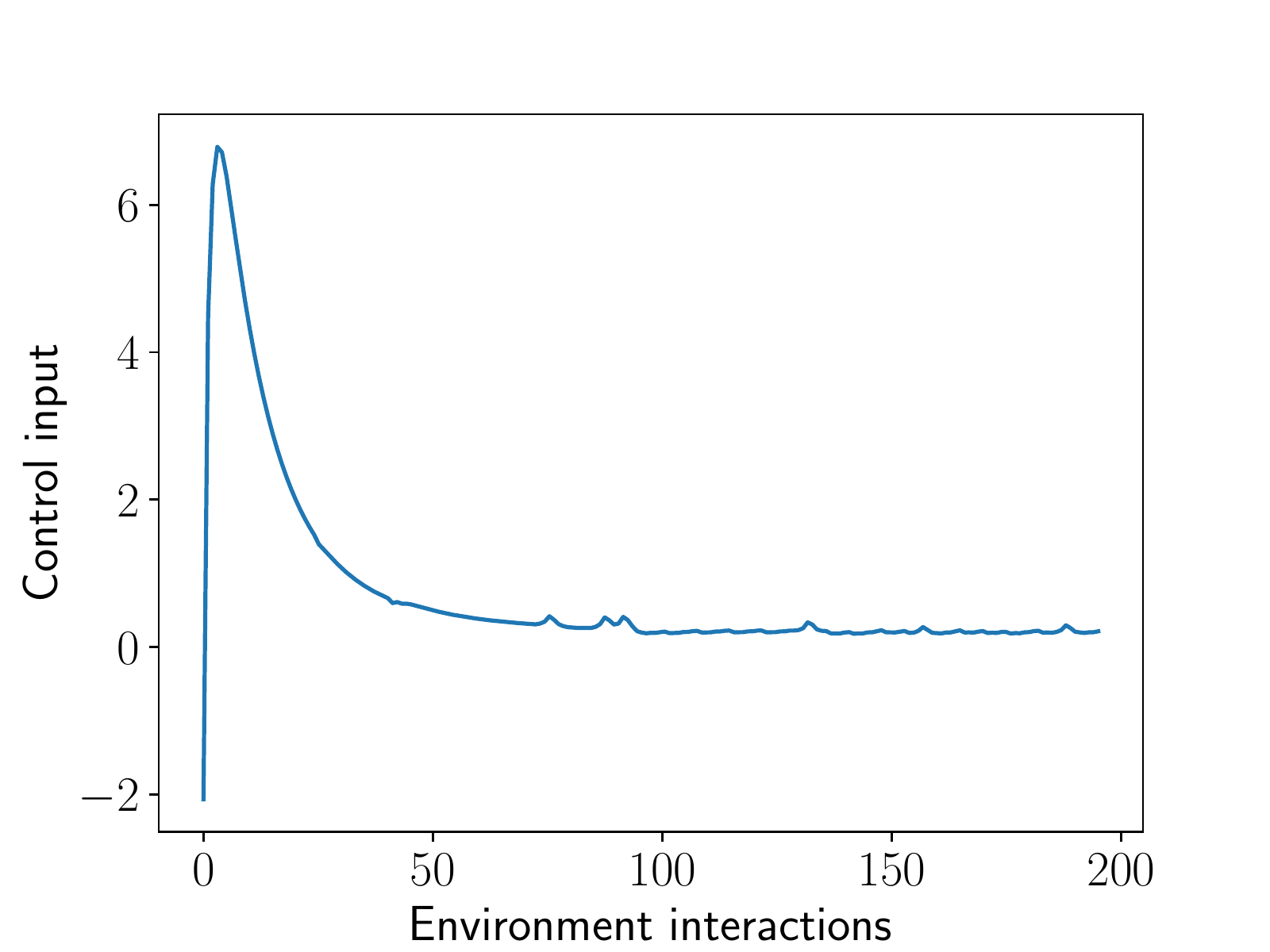}
    \caption{One episode.}
    \label{fig:Cartpole_controlinput1}

\end{subfigure}
\caption{Control input $u(t)$ in the Cartpole environment with $\mathcal{H}_\infty$ controller.}
\label{fig:Cartpole_controlinput}
\end{figure}

\begin{figure}[htb!]
    \centering
    \includegraphics[width=0.5\linewidth]{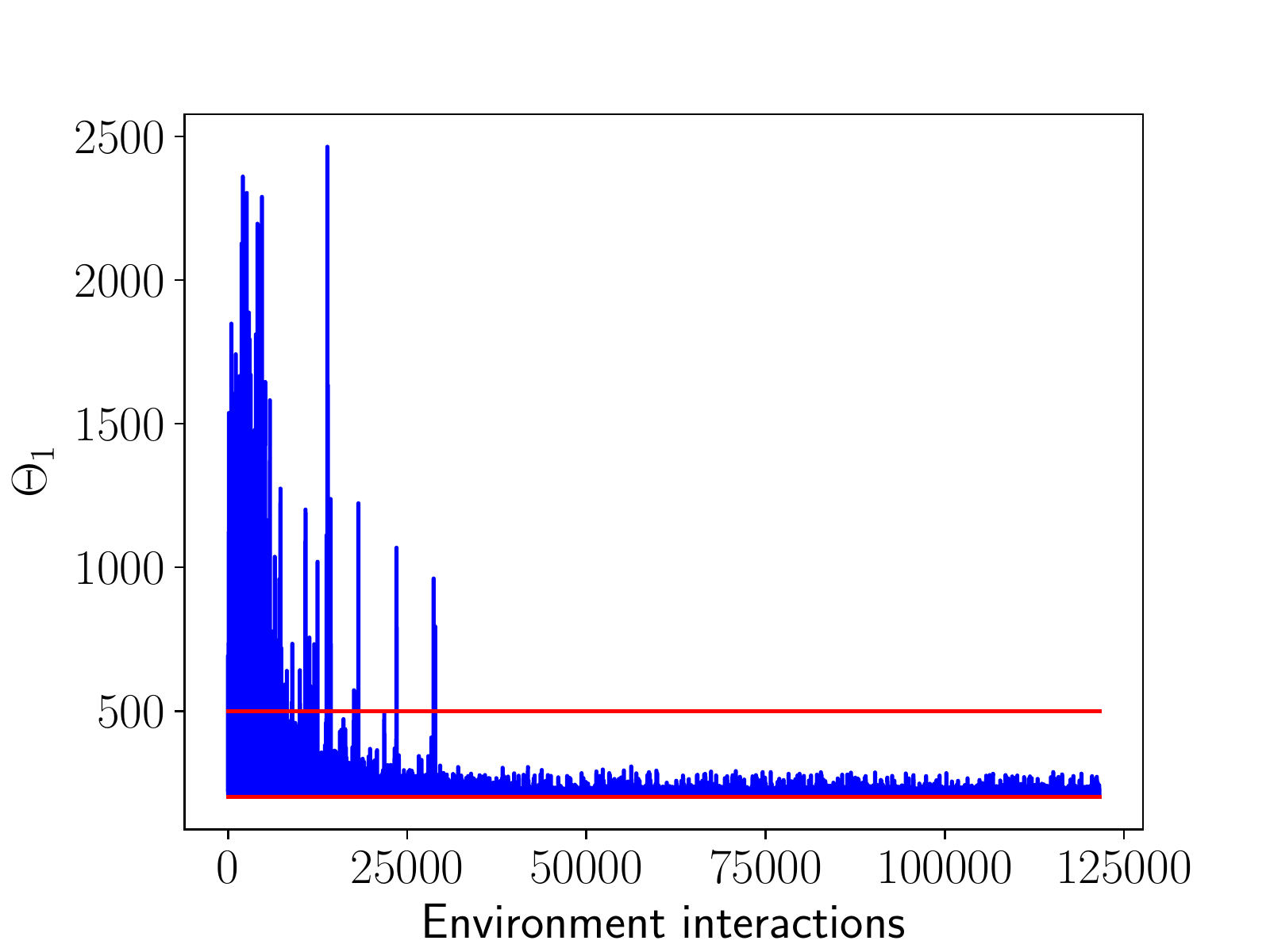}
    \caption{$\Theta_1$ values in the Cartpole environment with $\mathcal{H}_\infty$ controller.}
    \label{fig:Cartpole_theta}
\end{figure}


\subsection{Acrobot}
\label{sec:acrobot}
This environment is slightly more complex than the Cartpole. It is an under-actuated dynamical system where the goal is swinging up a double pendulum using an actuated joint between the two links. The environment has six continuous states (Cartesian coordinates of the pole ends and their angular velocities) and three discrete actions (positive or negative torque, or none). 
The episode is successful if the lower part reaches a certain height. For every step spent trying to swing the pole up, the agent gets $-1$ reward.

In this environment, the controlled agents are on par with DDQN; see Figure \ref{fig:Acrobot_learning} except for the $\mathcal{H}_2$-controlled one. It gets stuck in a local optimum in one of the random seeds, pulling the average score down. Oscillations are not significant in either case.
\begin{figure}[htb!]
    \centering
    \includegraphics[width=0.5\linewidth]{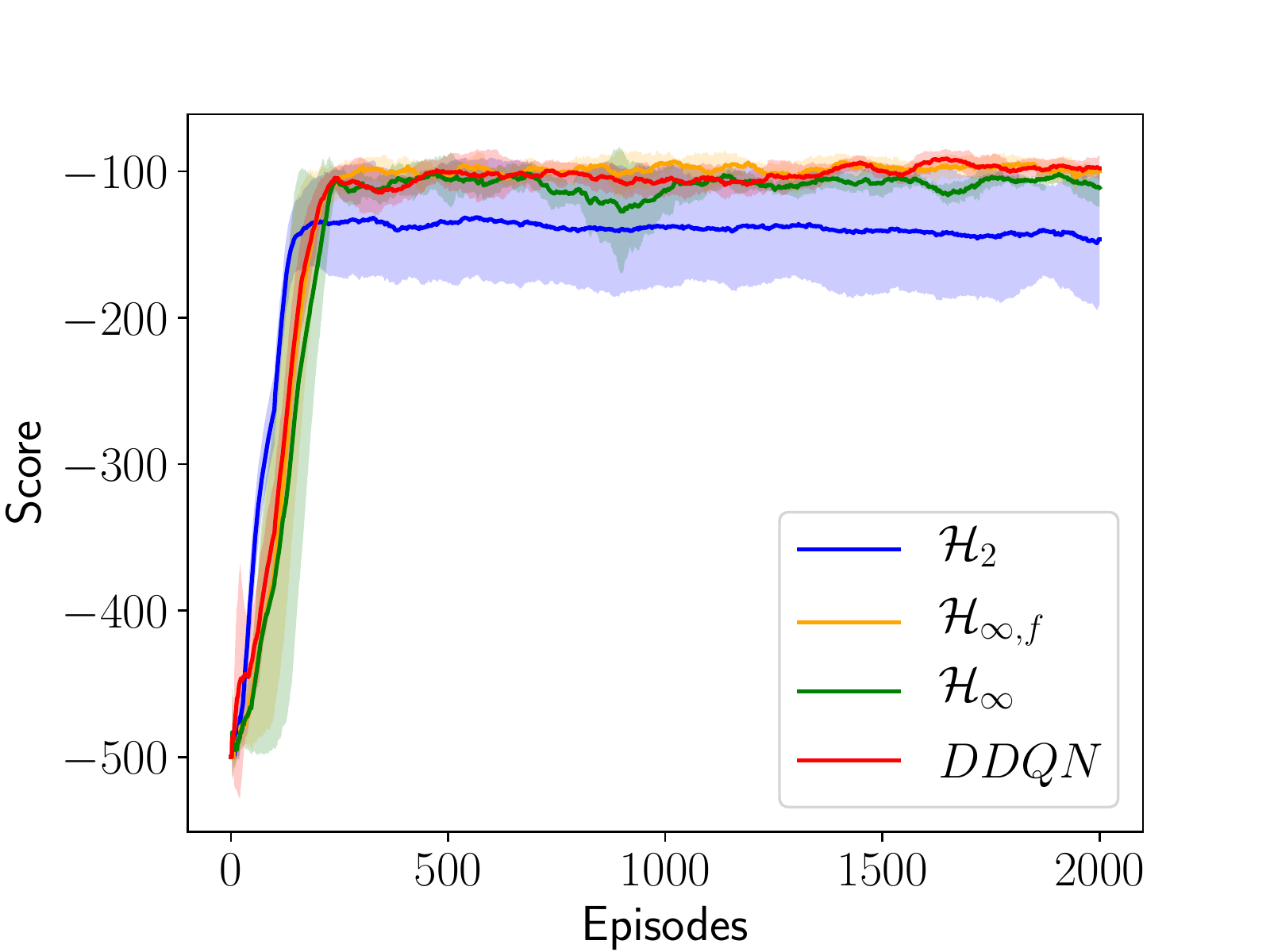}
    \caption{Learning with different methods in the Acrobot environment, average of 10  random seeds.}
    \label{fig:Acrobot_learning}
\end{figure}

\subsection{Mountain car}
\label{sec:mountaincar}
The Mountain car environment is the epitome of a sparse reward environment. "A car is on a one-dimensional track, positioned between two "mountains." The goal is to drive up the mountain on the right; however, the car's engine is not strong enough to scale the mountain in a single pass. Therefore, the only way to succeed is to drive back and forth to build up momentum. \cite{brockman2016openai}" Although it has small state-space (longitudinal position, and velocity of the car) and action space (accelerate to the left, right, or none), the agent only gets $-1$ reward every step it spends in the environment. The episode terminates if the car reaches its goal on top of the mountain or fails for 200 steps. Therefore, successful exploration is the key in this environment: the agent must find the top of the mountain via exploration as soon as possible.  
\begin{figure}[htb!]
    \centering
    \includegraphics[width=0.5\linewidth]{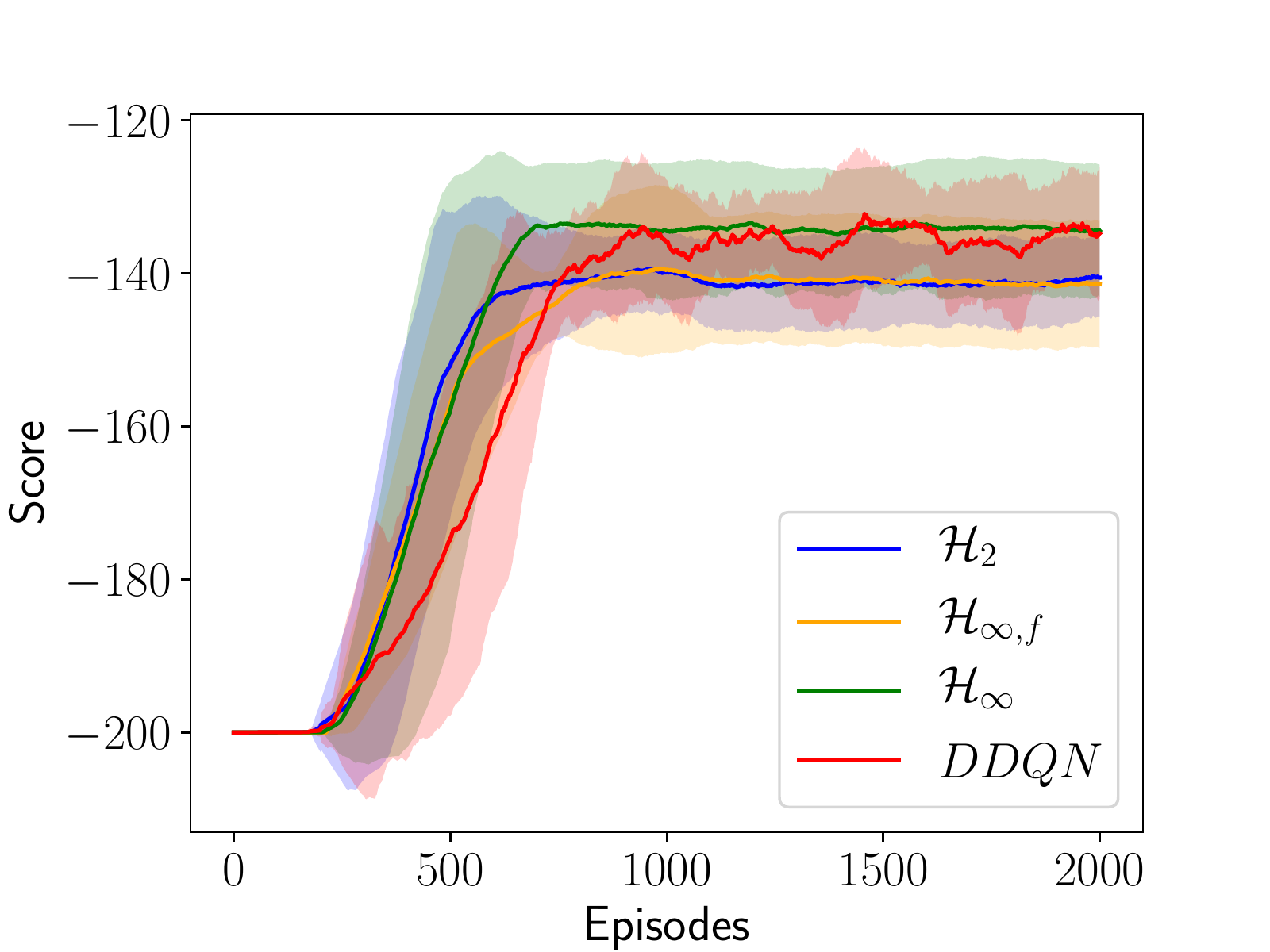}
    \caption{Learning with different methods in the Mountain car environment, average of 10  random seeds.}
    \label{fig:Mountaincar_learning}
\end{figure}
The proposed cascade controllers can ensure convergence, but they do not interfere with the exploration strategy. Therefore, the controlled agents start getting better scores at the same time as the benchmark. The controlled methods reach a plateau faster. On the other hand, this plateau is lower for the $\mathcal{H}_2$ and $\mathcal{H}_\infty$ controllers than the DDQN. The dynamical $\mathcal{H}_\infty$ controller is smoother and reaches a similar average score to the benchmark.

\section{Conclusions}
\label{sec:conclusions}
In this paper, deep Q-learning has been described as an uncertain LTI system, relying on some properties of the NTK, and assuming a deep and shallow neural network. This dynamical approach enables tackling deep Q-learning from a different perspective. Instead of employing random experience replay or a target network, cascade stabilizing controllers were formulated. For controller design, first, the magnitude of uncertainties (NTK parametric variation, uncertainty in the states, and exploration) have been evaluated. Then, the input and output signals in frequency domain have been analyzed. Simulations concluded that signals are in the low-frequency domain, and uncertainties can be bounded. 
Therefore, three controllers: $\mathcal{H}_2$, $\mathcal{H}_\infty$, and fixed-structure $\mathcal{H}_\infty$ have been proposed. The $\mathcal{H}_2$ controller cannot handle uncertainties and must be recomputed every step, only guaranteeing local stability. The $\mathcal{H}_\infty$ controller is developed in frequency domain considering uncertainties too. Low-frequency of the signals makes it possible to synthesize a controller with constant gains that has matching performance to the dynamical $\mathcal{H}_\infty$ controller. The integrating property of the proposed cascade controllers has a smoothing effect on the Q-function, acting as a bias in the loss function. Therefore, learned Q-values will be offset from the theoretical (tabular) ones.
On the other hand, the proposed approach requires fewer heuristics and provides more transparency. Assumptions for the NTK and the control-oriented weighting make the agent's design more straightforward. In addition, the absence of the target network and randomized replay memory obviates further the need for heuristics. 
The synthesized controlled learning methods were tested in three OpenAI Gym environments. Results are summarized numerically in Table \ref{tab:result_summary}. It can be concluded that the dynamical $\mathcal{H}_\infty$ is outperforms the DDQN in Cartpole and Mountain car, while it fails in the Acrobot environment. The non-dynamical ($\mathcal{H}_2$ controller and the fixed-structure $\mathcal{H}_\infty$ controller with limited dynamics cannot compete with the benchmark. In terms of environment interactions, learning can be sped up significantly, while guaranteeing convergent behaviour with the controlled loss functions. 
\begin{table}[htb] 
\centering
\begin{tabular}{lrrr}
\hline
& \multicolumn{1}{c}{Cartpole} & \multicolumn{1}{c}{Acrobot} & \multicolumn{1}{c}{Mountain car}   \\ \hline
$\mathcal{H}_2$ & 187.92 $\pm$ 2.47  & -146.44 $\pm$ 45.09   & -140.57 $\pm$ 5.07  \\
$\mathcal{H}_{\infty,f}$ & 192.59 $\pm$ 7.13  & -99.93 $\pm$ 3.79 & -141.41 $\pm$ 8.38  \\
$\mathcal{H}_{\infty}$   & \textbf{199.37 $\pm$ 1.21} & -111.73 $\pm$ 13.17   & \textbf{-134.45 $\pm$ 8.63} \\
DDQN & 197.56 $\pm$ 7.37  & \textbf{-98.04 $\pm$ 8.58} & -135.79 $\pm$ 9.41 
\end{tabular}
\caption{Final scores (after 2000 episodes of learning) in every environment. Best scores are highlighted in bold.}
\label{tab:result_summary}
\end{table}

\section{Acknowledgements}
This work has been supported and funded by the project RITE (funded by CHAIR, Chalmers University of Technology). The authors would like to thank Vincent Szolnoky and Viktor Andersson for their insightful comments on the paper. 

\section{Data Availability Statement}
The data and source code files that support the findings of this study are openly available in the GitHub repository Controlled\_DQN at \url{https://github.com/bva-bme/Controlled\_DQN}, reference number \cite{bva_github}.

\bibliography{biblio}
\end{document}